
\documentclass{article}

\usepackage{microtype}
\usepackage{graphicx}
\usepackage{subcaption, siunitx, graphicx, caption,tabularx}
\usepackage{booktabs} 

\usepackage{amsmath}
\usepackage{bm}
\usepackage{amsthm}
\usepackage{amssymb}

\newtheorem{definition}{Def.}
\newtheorem{assumption}{Assumption}
\newtheorem{thm}{Theorem}
\newtheorem{lem}{Lemma}
\newtheorem{fact}{Fact}

\usepackage{algorithm}
\usepackage[noend]{algorithmic}
\usepackage{xcolor}
\usepackage{dsfont}

\usepackage{hyperref}


\usepackage[accepted]{icml2022}[nonatbib]


\icmltitlerunning{Efficient Model-based Multi-agent Reinforcement Learning via Optimistic Equilibrium Computation}

\begin{document}

\twocolumn[
\icmltitle{Efficient Model-based  Multi-agent  Reinforcement Learning\\ via Optimistic Equilibrium Computation}



\icmlsetsymbol{equal}{*}

\begin{icmlauthorlist}
\icmlauthor{Pier Giuseppe Sessa}{eth}
\icmlauthor{Maryam Kamgarpour}{equal,epfl}
\icmlauthor{Andreas Krause}{equal,eth}
\end{icmlauthorlist}

\icmlaffiliation{eth}{ETH Z\"urich, Rämistrasse 101, 8092 Z\"urich.}
\icmlaffiliation{epfl}{EPFL Lausanne, Rte Cantonale, 1015 Lausanne}

\icmlcorrespondingauthor{Pier Giuseppe Sessa}{sessap@ethz.ch}
\icmlkeywords{Machine Learning, ICML}

\vskip 0.3in
]



\printAffiliationsAndNotice{\icmlEqualContribution} 

\begin{abstract}
We consider model-based multi-agent reinforcement learning, where the environment transition model is unknown and can only be learned via expensive interactions with the environment. We propose \textsc{H-MARL} (Hallucinated Multi-Agent Reinforcement Learning), a novel sample-efficient algorithm that can efficiently balance \emph{exploration}, i.e., learning about the environment, and \emph{exploitation}, i.e., achieve good equilibrium performance in the underlying general-sum Markov game. \textsc{H-MARL} builds high-probability confidence intervals around the unknown transition model and sequentially updates them based on newly observed data. Using these, it constructs an \emph{optimistic} \emph{hallucinated game} for the agents for which equilibrium policies are computed at each round.
We consider general statistical models (e.g., Gaussian processes, deep ensembles, etc.) and policy classes (e.g., deep neural networks), and theoretically analyze our approach by bounding the agents' \emph{dynamic regret}. Moreover,  we provide a convergence rate to the equilibria of the underlying Markov game. We demonstrate our approach experimentally on an autonomous driving simulation benchmark. \textsc{H-MARL} learns successful equilibrium policies after a few interactions with the environment and can significantly improve the performance compared to non-optimistic exploration methods.\looseness=-1
\end{abstract}

\section{Introduction}
\label{sec:intro}
Multi-Agent Reinforcement Learning (MA-RL) has shown promising successes in solving complex sequential decision-making tasks faced by multiple interacting agents, such as in robotics~\cite{levine2016end} and game playing~\cite{mnih2015human}. 
However, its applicability to several real-world problems is still limited by the required large amount of training data. MA-RL methods can be classified as \emph{model-free}, where agents are trained directly on the obtained rewards, and \emph{model-based} where agents are trained based on an estimated model of the environment. For comprehensive surveys, see ~\citep{busoniu2008comprehensive,gronauer2021multi}.\looseness=-1

A central challenge for MA-RL, which is crucial for its scalability to the real world, is the problem of trading off \textit{exploration} with \textit{exploitation}~\cite{busoniu2008comprehensive}. Exploration is the ability to learn about the environment and generalize over unseen states and actions, hence being important to avoid suboptimal policies and enable task generalization. Exploitation of the data observed so far, on the other hand, is necessary to ensure that the agents' policies achieve high performance throughout the learning. \looseness=-1

While there has been an extensive set of techniques addressing this challenge in single-agent RL (see, e.g., \citet{jaksch2010near,luo2018,curi2020} and references therein), it remains fairly unexplored in the MA-RL domain. In MA-RL, this is particularly difficult since the joint action space grows {\em exponentially} with the number of agents and, as a result, effective single-agent RL techniques (such as $\epsilon$-greedy) can provably fail~\citet{Mahajan2019MAVENMV}. Recently, new methods have been proposed to circumvent these challenges and encourage exploration in different game settings, as discussed in Section~\ref{sec:related_works}. However, they are typically concerned with agents' asymptotic performance (i.e., neglecting agents' performance during learning), specific game settings (cooperative or two-player zero-sum), and tabular domains (i.e., finite state and action spaces).\looseness=-1

In this work, we propose a novel sample-efficient algorithm for MA-RL which can efficiently balance exploration and exploitation. We consider {\em general-sum Markov games} (also known as mixed cooperative-competitive setting in the MA-RL literature) with {\em continuous action and state spaces}, and  quantify agents' performance with the notion of \emph{dynamic regret}. The dynamic regret measures the cumulative distance (throughout the learning rounds) of the played games from being at equilibrium. We provide a regret bound for our method, together with sample-efficient convergence rates to the equilibria of the Markov game.
To the best of our knowledge, our results are the first guarantees in such a setting. We further illustrate our approach on an autonomous driving simulation benchmark.

\subsection{Related Work}\label{sec:related_works}
\vspace{-0.4em}

A set of different techniques have been proposed to encourage exploration in \emph{cooperative} MA-RL, i.e, where groups of agents share common team-level objectives.
Among those, \citet{Mahajan2019MAVENMV} enforce agents' exploration using a latent variable controlled by a hierarchical policy, \citet{wang2019influence} use influence-based techniques, \citet{liu2021cooperative} proposes a normalized entropy-based method, while \citet{zheng2021episodic} achieve exploration by promoting agents' curiosity. Moreover, \citet{Mahajan2021TesseractTA} and \citet{van2021model} develop model-based algorithms which utilize low-rank tensor decompositions of rewards and the transition function. All of these techniques, however, are not applicable to our setting of general-sum Markov games where each agent is concerned with its own reward function. Moreover, while demonstrating good experimental performance, these approaches lack theoretical guarantees or consider only asymptotic convergence.

A theoretically grounded approach to guide exploration, which has been extensively studied in \textit{single-agent} RL, is the celebrated \textit{optimism in the face of uncertainty} (OFU) principle. In a nutshell, it consists of choosing actions that maximize an optimistic estimate of the agent's value function. OFU can efficiently balance exploration with exploitation and has been applied in several single-agent RL domains  \citep[e.g.,][]{jaksch2010near, luo2018, curi2020}, yielding sample-efficient regret guarantees.
Inspired by this line of work, our approach utilizes the OFU principle in our multi-agent domain. In particular, we utilize the model-based techniques of \citet{curi2020} to compute optimistic agents' \textit{equilibria} (as opposed to single-agent optimal policies), and generalize the obtained guarantees to our much more complex MA-RL setting.

Applications of the OFU principle in MA-RL are fairly unexplored, with a few recent exceptions. The line of works by
\citet{bai2020provable}, \citet{xie2020learning}, \citet{bai2020b}, \citet{jin2021power}, \citet{loftin2021strategically} and \citet{chen2022almost} propose optimistic approaches which, however, consider only the special case of  \textit{two-player zero-sum} Markov games. \citet{pasztor2021efficient}, instead, study the problem of optimistic mean-field control. 
In our $N$ players general-sum setting, the first guarantees are by \citet{liu2021sharp} who propose a centralized optimistic value iteration scheme by adding bonus terms to the estimated Q-functions. Their algorithm is applicable only to the \textit{tabular} case (i.e., finite number of state $S$ and actions $A^i$ for each agent~$i$) and enjoys a sample-complexity guarantee of $\tilde{\mathcal{O}}(H^4S^2 \Pi_{i=1}^N A^i/ \epsilon^2)$ to reach $\epsilon$ equilibria, where $H$ is the game horizon. In the same setting, \citet{mao2022provably} shows that this can be improved to $\tilde{\mathcal{O}}(H^6S \max_{i}A^i/\epsilon^2)$ by a fully decentralized scheme. Compared to these works, we consider Markov games with \textit{continuous} states and actions, a setting where the aforementioned methods do not apply. 
Similar to \citet{liu2021sharp}, our approach follows under the `centralized training with decentralized execution' paradigm~\cite{lowe2017}. However, differently from \citet{liu2021sharp}, we construct optimistic value functions using general statistical models for the environment transition (such as Gaussian processes or deep ensembles), which allow exploiting the correlations in the transition function over continuous domains. 
Our guarantees
capture the degrees of freedom in the environment transition function and the generalization ability of the used statistical model.

\subsection{Our contributions}
\vspace{-0.5em}

We propose \textsc{H-MARL} (Hallucinated MA-RL), a novel sample-efficient algorithm for model-based MA-RL which can efficiently guide exploration by hallucinating optimistic value functions for the agents.
Regressing on past observed data, \textsc{H-MARL} builds confidence estimates around the true transition function using general statistical models. This allows exploiting correlations in the environment transition and generalizing for unseen game outcomes. Using these estimates, at each round \textsc{H-MARL} constructs upper confidence bounds on the agents' value functions and utilizes an equilibrium-finding subroutine to compute the respective agents' equilibria.
We theoretically analyze our approach by bounding the agents' dynamic regret and providing a sample-complexity guarantee to reach equilibria of the underlying general-sum Markov game.
To the best of our knowledge, ours are the first guarantees of this sort for continuous state and action spaces. We demonstrate our approach on an autonomous driving MA-RL benchmark, where \textsc{H-MARL} can quickly learn successful equilibrium policies and leads to superior performance compared to non-optimistic exploration methods.\looseness=-1

\section{Problem Setup}
We consider a Multi-Agent Reinforcement Learning (MA-RL) problem, formulated as a stochastic (Markov) game~\cite{Shapley1095} among $N$ agents over a finite episode of $H$ steps. 
At each time step $h$, the environment's state is $s_h \in \mathcal{S}\subseteq \mathbb{R}^p$ and each agent~$i$ selects action $a_h^i \in \mathcal{A}^i \subseteq \mathbb{R}^q$. Then, each agent obtains reward $r^i(s_h, a_h^1, \ldots, a_h^N)$ according to her reward function $r^i: \mathcal{S} \times \Pi_{i=1}^N \mathcal{A}^i \rightarrow \mathbb{R}$, and the environment transitions to state $s_{h+1}\sim P(\cdot |s_h, a_h^1, \ldots, a_h^N)$ where $P$ is the transition probability function. 
Each agent plays according to a policy $ \pi^i: \mathcal{S} \rightarrow \mathcal{A}^i$ which maps states to actions (our results extend also to the partially observable case, where agents have access only to a subset of the state), with the goal of maximizing her value function:
\begin{align*}
   V^i(\pi^i, \pi^{-i}) = \mathop{\mathbb{E}}\left[ \sum_{h=0}^{H-1} r^i(s_h, \pi^1(s_h), \ldots, \pi^N(s_h)) 
   \right] \,,
\end{align*}
where we have used the notation $\pi^{-i}$ to indicate the policies of all agents except agent $i$. For simplicity, we assume that the initial state $s_0$ and agent policies are deterministic, but our results can be naturally extended to the stochastic case. We let $\Pi^i$ be the policy space of agent~$i$ (e.g., neural network policies,~\citet{foerster2016learning}, Gaussian policies,~\citet{duan2016benchmarking}, etc.), and let $\bm{\Pi}:= \Pi^1 \times \cdots \times \Pi^N$ be the joint policy space. Similarly, we define $\bm{\mathcal{A}}:= \mathcal{A}^1 \times \cdots \times \mathcal{A}^N$ and let $\bm{\pi} \in \bm{\Pi}$ and $\bm{a}\in \bm{\mathcal{A}} $ be joint policy and action profiles, respectively. We also define $[N] := \{ 1,\ldots,N\}$.

We make no assumption (e.g., cooperative or zero-sum) on the game rewards structure, and consider the most general class of general-sum Markov games, also known as mixed cooperative-competitive setting in MA-RL literature. In such a setting, natural solution concepts are game \emph{equilibria}, i.e, outcomes from which rational agents' do not have incentives to deviate. We consider the most general class of equilibria, denoted as Coarse-Correlated Equilibria (CCE)~\cite{moulin1978strategically} defined as follows.

\begin{definition}[CCE]\label{def:CCE} A Coarse Correlated Equilibrium (CCE) is a distribution $\bm{\mathcal{P}}_\star$ over $\bm{\Pi}$ such that, for each agent $i$ and any policy $\pi^i \in \Pi^i$, 
\begin{equation*}
    \mathbb{E}_{\bm{\pi} \sim \bm{\mathcal{P}}_\star}[V^i(\bm{\pi})] \geq \mathbb{E}_{\pi^{-i} \sim \bm{\mathcal{P}}_\star^{-i}}[V^i(\pi^i, \pi^{-i})] \,.
\end{equation*} 
\end{definition}

CCEs generalize other equilibrium notions such as Nash equilibria~\cite{nash1950equilibrium}, and have received  significant interest from the learning community because they can be computed by decentralized algorithms in polynomial time~\cite{cesa2006prediction, marris2021multi,mao2022provably}. 
In practice, one can usually compute CCEs only up to some approximation factor. We denote such outcomes as $\epsilon$-CCE, where the condition above is satisfied only up to a $\epsilon>0$ accuracy.\looseness=-1

\paragraph{Model-based MA-RL} In this work, we take a model-based approach to compute game equilibria. Namely, we assume agents' reward functions are known\footnote{The proposed approach can also be extended to unknown reward functions, but we assume them known for ease of exposition.} (often, they are suitably designed depending on the agents' goals), and the environment's state transition follows the dynamics: 
\begin{equation}\label{eq:transition}
    s_{h+1} = f(s_h, a_h^1, \ldots, a_h^N ) + w_h ,
\end{equation} where $f:\mathcal{S}\times \bm{\mathcal{A}} \rightarrow \mathcal{S}$ is the environment transition function, and $w_h$ is zero-mean sub-Gaussian noise i.i.d. over time.

Transition function $f(\cdot)$ is a-priori  \emph{unknown} and can only be learned online via sequential interactions with the environment. 
Hence, the learning protocol goes as follows. At each round $t$, agents choose policies $\bm{\pi}_t = \{\pi_t^1, \ldots, \pi_t^N\}$ (possibly via randomization), play an episode of the Markov game, observe corresponding state transitions data $\mathcal{D}_t = \{s_{h}^t, \mathbf{a}_h^t,  h=0,\ldots, H-1\}$ and use these to improve their policies for the next round.

Interacting with the environment can be very costly, hence we seek to minimize the number of interaction rounds. At the same time, however, we want agents to achieve good performance across the played Markov games. To do so, we measure the performance of agent~$i$ after $T$ rounds by its dynamic regret:
\begin{definition}[Dynamic regret]
\begin{equation*}
R^i(T) :=   \sum_{t=1}^{T} \max_{\pi \in \Pi^i} \mathbb{E}_{\pi_t^{-i}}\left[ V^i(\pi, \pi_t^{-i})\right] -  \mathbb{E}_{{\bm\pi}_t}\left[V^i({\bm\pi}_t)\right] \,.
\end{equation*}
\end{definition}
The dynamic regret measures the cumulative difference between the best achievable expected value of the game at each round $t$, and the actual value obtained. Thus, it is an indicator of how far the played games were from being at equilibrium 
(note that if $\bm{\pi}_t$ is sampled from a CCE at each round $t$, $R^i(T) = 0$ for all $i$ by Def.\ref{def:CCE}). The notion of dynamic regret is widely adopted in multi-agent learning~\citep[see, e.g.,][]{zhang2020minimizing} and represents a stronger performance benchmark than the \emph{static} regret~\cite{cesa2006prediction}, which compares the obtained rewards with the ones of the best fixed policy in hindsight. Intuitively, we can compete with such a stronger benchmark because we allow centralized training of MA-RL agents and learn to play a time-invariant Markov game repeated over rounds. 

In the next section, we propose an algorithm that can efficiently balance exploration (learning about the environment) and exploitation (achieving low regret). Provided ``sufficient model generalizability'' (as formalized later on), it ensures that the agents' regret grows sublinearly with the interaction rounds $T$, i.e., $\lim_{T\rightarrow \infty} \frac{1}{T }R^i(T) \rightarrow 0$. 

\section{The H-MARL Algorithm}\label{sec:algorithm}

We propose \textsc{H-MARL}, a novel sample-efficient algorithm for the model-based MA-RL setting defined above. The algorithm falls under the widely adopted `centralized training with decentralized execution' (CTDE) paradigm~\citep{lowe2017}, i.e, agent policies are trained centrally but, once deployed, they require only agents' local information. The proposed method utilizes two main building blocks: 1) learning the transition model from observed data, and 2) hallucinating optimistic agents' value functions. We describe them in detail next.

\subsection{Learning the transition model from data} 
At the end of each round $t$, we can use observed transition data (along with possibly available offline data) $\{\mathcal{D}_\tau\}_{\tau=1}^t$ to estimate the transition function $f$ via its posterior \emph{mean} and \emph{confidence} functions
\begin{equation*}
\mu_t(s, \mathbf{a}) \in \mathbb{R}^p,\quad  \Sigma_t(s, \mathbf{a})\in \mathbb{R}^{p\times p},
\end{equation*}
respectively. For this purpose, a variety of statistical models can be used, depending on the problem. Statistical models allow exploiting correlations in the observed data and generalize for non-visited states and joint actions. A concrete example are Gaussian Processes~\citep[GPs, cf.,][]{williams2006gaussian}. GPs are powerful non-parametric models widely used in Bayesian optimization and have recently found new application areas, e.g., in online learning~\cite{sessa19noregret,sessa20contextual} and model-based RL~\cite{curi2020, Curi2021CombiningPW}. According to the GP model, $\mu_t$ and $\Sigma_t$ are obtained via kernel-ridge regression on the observed data. Popular kernel choices include linear, squared exponential, and Màtern kernels~\cite{srinivas2009gaussian}. 
For higher dimensional state and action spaces, a more scalable alternative is represented by deep Neural Network (NN) ensembles~\cite{Lee2015WhyMH,lakshminarayanan2016simple}. In such a case, an ensemble of NNs can be trained on the data $\{\mathcal{D}_\tau\}_{\tau=1}^t$ (e.g., with bootstrapping techniques or by random shuffling the whole dataset), and posterior mean $\mu_t$ and confidence $\Sigma_t$ are computed by aggregating the ensemble predictions.

We make no assumptions on the used statistical model, except that it is (conservatively) {\em calibrated}:
\begin{assumption}\label{ass:calibrated} We assume the statistical model is calibrated, i.e., there exists 
$\beta_t \in \mathbb{R}_+$ such that with probability at least $1-\delta$, $|f(s, \mathbf{a})- \mu_t(s, \mathbf{a})| \leq  \beta_t  \sigma_t(s, \mathbf{a})$ holds coordinate-wise, $\forall s, \forall \mathbf{a}, \forall t$, where $\sigma_t(\cdot) = \text{diag}(\Sigma_t(\cdot))$.
\end{assumption}
A calibrated model allows us to bound, with a given confidence, how the trajectories predicted by the learned model plausibly differ from those corresponding to the true environment. 
We note that GP models readily satisfy Assumption~\ref{ass:calibrated}, provided that $f(\cdot)$ has a bounded norm in the reproducing kernel Hilbert space associated to the used kernel function  \citep[cf.,][]{srinivas2009gaussian, chowdhury17kernelized}. Moreover, in case of ensemble NNs models, different calibration techniques have been proposed in the literature, e.g., by \citet{kuleshov2018accurate} and \citet{zhang2020mix}.

\subsection{Hallucinating agents' value functions}
Using calibrated posterior mean and confidence $\mu_{t}(\cdot), \Sigma_{t}(\cdot)$, the proposed algorithm consists of building hallucinated \emph{optimistic} value functions for the agents. For each agent~$i$, these are obtained as:
\begin{align}
    & \textup{UCB}^i_t(\bm{\pi})  = \max_{\eta(\cdot) \in [-1,1]^p}  \mathop{\mathbb{E}}\left[\sum_{h=0}^{H-1} r^i(s_h, \mathbf{a}_h)\right] \label{eq:UCB_Vf_modelbased}\\
& \hspace{1.5em}  \text{s.t.} \quad  
 \mathbf{a}_h = \bm{\pi}(s_h) \nonumber\\ 
& \hspace{3.8em} s_{h} = \mu_{t}(s_{h-1},\mathbf{a}_{h-1}) \nonumber \\ & \hspace{4em}+ \beta_t \cdot  \Sigma_{t}(s_{h-1},\mathbf{a}_{h-1}) \eta(s_{h-1}, \mathbf{a}_{h-1}) + w_{h} \nonumber \,.
\end{align}
The function $\textup{UCB}^i_t(\cdot)$ maps a joint policy profile $\bm{\pi}$ to an optimistic estimate of the value function for agent~$i$. The optimism is injected through the auxiliary function $\eta(\cdot)$ which, for each state and agents' joint action, selects the state transition that leads to the largest expected cumulative reward, yet being plausible (by Assumption~\ref{ass:calibrated} and since $\eta(s,\mathbf{a}) \in [-1,1]^p,  \forall (s,\mathbf{a})$). More precisely, under Assumption~\ref{ass:calibrated} it holds $\textup{UCB}^i_t(\bm{\pi}) \geq V^i(\bm{\pi})$, for all joint policies $\bm{\pi}$ and all rounds $t\geq 1$, as stated in Lemma~\ref{lem:Vf_confidence_lemma} in Appendix~\ref{app:appendix_proof_thm}. Hence, the functions $\textup{UCB}^i_t(\cdot)$ are provable upper confidence bounds for the agents' value functions.
Their computational bottleneck is represented by the outer maximization over general functions $\eta(\cdot)$. To alleviate this, in Section~\ref{sec:approximation} we provide a practical approximation of $\textup{UCB}^i_t(\cdot)$ via sampling, which we also use in our experiments.
Finally, we note that Eq.~\eqref{eq:UCB_Vf_modelbased} can be viewed as the multi-agent generalization of the hallucinated value function proposed by \citet{curi2020} for single-agent RL. 
\looseness=-1
These functions are at the core of the proposed approach. Indeed, our proposed \textsc{H-MARL} algorithm utilizes an equilibrium finding subroutine to compute, at each round $t$, a CCE of the hallucinated Markov game defined by the value functions $\{ \textup{UCB}^i_{t-1}(\cdot), i=1,\ldots, N\}$. Let $\bm{\mathcal{P}}_{t}$ be the computed CCE equilibrium. Then, agents play the Markov game using equilibrium policies $\bm{\pi}_{t} \sim \bm{\mathcal{P}}_{t}$, and the transition models $\mu_t, \Sigma_t$ are updated based on the newly observed data. We summarize our overall approach in Algorithm~\ref{alg:H_MARL}.

\begin{algorithm}[t!]
\caption{The \textsc{H-MARL} algorithm}\label{alg:H_MARL}
\begin{algorithmic}[1]
     \REQUIRE Agents' policy spaces $\Pi^1, \ldots ,\Pi^N$.
    \FOR {$t = 1, \ldots, T$} {}
            \vspace{0.2em}
    \STATE  $\bm{\mathcal{P}}_{t} \leftarrow \textmd{Find-CCE}\big(\textup{UCB}^1_{t-1}(\cdot), \ldots, \textup{UCB}^N_{t-1}(\cdot) \big)$, \\
        with $\textup{UCB}^i_{t-1}(\cdot)$ defined in Eq. \eqref{eq:UCB_Vf_modelbased}. 
        \vspace{0.3em}
        \STATE Episode rollout using policies $$\bm{\pi}_{t} = (\pi_{t}^1,\ldots, \pi_{t}^N) \sim \bm{\mathcal{P}}_{t}$$
        \STATE Update transition model $\mu_t(\cdot,\cdot)$, $\Sigma_t(\cdot,\cdot)$, using observed $H$ transitions. 
    \ENDFOR  
    
\end{algorithmic}
\end{algorithm}


We leave the equilibrium computation step (Line 2 in Algorithm~\ref{alg:H_MARL}) very general, as this can be achieved by various MA-RL methods for general-sum Markov games. Importantly, because equilibrium is computed with respect to the hallucinated value functions, this step {\em does not require interacting with the true environment} and hence sample-efficiency is not crucial here. Accuracy of the returned equilibrium can be traded off with its computational complexity and different algorithms are more suitable than others, depending on the game. A list of practical methods has demonstrated good empirical performance in computing equilibrium policies, e.g., using independent learners, actor-critic formulations~\cite{iqbal2019actor}, policy gradients~\cite{lowe2017}, or exploiting mean-field approximations~\cite{yang18d}. Provably convergent approaches also exist, e.g., using optimistic~\cite{liu2021sharp} or decentralized~\cite{mao2022provably} Q-learning. For ease of exposition we assume an exact CCE is computed at each round, but we will also discuss the case where only $\epsilon$-CCEs are obtained.


\subsection{Theoretical guarantees}
We now present theoretical guarantees for \textsc{H-MARL}. More specifically, we obtain a dynamic regret bound for the agents after $T$ rounds. Additionally, we provide an offline sample-complexity bound on the number of rounds $T$ to reach an $\epsilon$-CCE of the underlying Markov game.

Our guarantees depend on the following quantity, which characterizes the complexity of the transition model:
\begin{equation}\label{eq:complexity_measure}
\mathcal{I}_T := \max_{\substack{\mathcal{D}_1, \ldots ,\mathcal{D}_T \\ \mathcal{D}_i \subset \mathcal{S} \times \bm{\mathcal{A}} \\  |\mathcal{D}_i| = H}}  \quad  \sum_{t=1}^{T} \sum_{(s, \mathbf{a}) \in \mathcal{D}_t} \|\sigma_{t-1}(s, \mathbf{a}) \|_2^2 \,.
\end{equation}
It quantifies the maximum predictive uncertainty about the model, where the worst case is taken over all possible observed transitions up to round $T$. Intuitively, easier transition models should be learned with less uncertainty, thus leading to a smaller $\mathcal{I}_T$. \citet{curi2020} define the quantity $\mathcal{I}_T$ for the single-agent case and show that, although it is generally impossible to compute, it can be bounded for the special case of GP models. The same considerations apply to our setting: it holds $\mathcal{I}_T \leq pH\gamma_{HT}$, where $\gamma_{HT}$ is the \emph{maximum information gain} about $f$ from $HT$ noisy observations, a typical quantity in Bayesian optimization~\citep{srinivas2009gaussian,chowdhury17kernelized}. Known bounds for $\gamma_{HT}$ exist depending on the kernel, e.g., $\gamma_{HT} \leq \mathcal{O}(\log(HT)^{d+1})$ and $\gamma_{HT} \leq \mathcal{O}(d \log(HT))$, respectively for  squared-exponential and linear kernels, where $d=p+Nq$ is the domain dimension~\cite{srinivas2009gaussian}. 

Moreover, the obtained guarantees rely on the following Lipschitz assumptions.
\begin{assumption}
\label{ass:lipschitzness}
We assume the transition function $f$, reward functions $r^i$, policies $\pi \in \Pi^i$, and the posterior standard deviation function $\sigma_t$ are Lipschitz continuous w.r.t. $\|~\cdot~\|_2$ with constants $L_f$, $L_r$, $L_\pi$, and $L_\sigma$, respectively for all $i\in [N]$ and $t \geq 0$.
\end{assumption}

Lipschitz continuity of the transition function $f$ is required for generalization, otherwise if $f$ changes too abruptly we expect any efficient model-based method which aims to learn it to fail. Moreover, lipschitzess of rewards and policies is not a restrictive assumption since they are typically hand-designed. In addition, GP models lead to Lipschitz continuous posterior standard deviations, according to the kernel metric~\citep[cf.,][Lemma 13]{curi2020}.

The following main theorem provides a dynamic regret bound for \textsc{H-MARL}, as a function of the different game quantities. Its proof is relegated to Appendix~\ref{app:appendix_proof_thm}.
\begin{thm}[Dynamic regret bound]\label{thm:thm1}
Let Assumptions~\ref{ass:calibrated} and~\ref{ass:lipschitzness} be satisfied. After $T$ rounds, the \textsc{H-MARL} algorithm ensures that, with probability $1-\delta$, each agent~$i$ has bounded dynamic regret:
\begin{equation*}
R^i(T)  \leq  \bar{L} H^{1.5} \sqrt{T \mathcal{I}_T},
\end{equation*}
where $\bar{L}=\mathcal{O}\big(N^{H/2} L_\pi^{H/2} (\bar{\beta}^H L_\sigma^H + L_f^H) + \log(1/\delta) \big)$, $\bar{\beta} = \max_{t} \beta_t$, and $\mathcal{I}_T$ is the complexity measure defined in \eqref{eq:complexity_measure}.
\end{thm}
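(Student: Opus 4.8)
The plan is to bound the per-round regret $r_t^i := \max_{\pi \in \Pi^i} \mathbb{E}_{\pi_t^{-i}}[V^i(\pi, \pi_t^{-i})] - \mathbb{E}_{\bm{\pi}_t}[V^i(\bm{\pi}_t)]$ by the model's predictive uncertainty along the played trajectory, and then sum over $t$ and apply Cauchy--Schwarz. The key leverage comes from the optimism established in Lemma~\ref{lem:Vf_confidence_lemma}, namely $\textup{UCB}_t^i(\bm{\pi}) \geq V^i(\bm{\pi})$ under Assumption~\ref{ass:calibrated}, combined with the fact that $\bm{\pi}_t$ is an exact CCE of the \emph{hallucinated} game defined by $\{\textup{UCB}_{t-1}^i\}$. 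First I would use the CCE property in the hallucinated game: for any deviation $\pi$, $\mathbb{E}_{\bm{\pi}_t}[\textup{UCB}_{t-1}^i(\bm{\pi}_t)] \geq \mathbb{E}_{\pi_t^{-i}}[\textup{UCB}_{t-1}^i(\pi, \pi_t^{-i})]$. Chaining this with optimism ($\textup{UCB}_{t-1}^i \geq V^i$ on the right) and the trivial direction on the left gives, for the maximizing deviation $\pi$,
\begin{equation*}
r_t^i \leq \mathbb{E}_{\bm{\pi}_t}\big[\textup{UCB}_{t-1}^i(\bm{\pi}_t) - V^i(\bm{\pi}_t)\big] \,,
\end{equation*}
which reduces the problem to controlling the gap between the optimistic value and the true value \emph{along the actually played policy}, eliminating the need to reason about the benchmark deviation directly.

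The core technical step is then to bound $\textup{UCB}_{t-1}^i(\bm{\pi}_t) - V^i(\bm{\pi}_t)$ by the accumulated confidence width $\beta_t \sigma_{t-1}$ over the $H$-step rollout. Both quantities are expected cumulative rewards of trajectories generated by the same policy $\bm{\pi}_t$ but under two different dynamics: the true $f$ (with noise $w_h$) versus the hallucinated dynamics $\mu_{t-1} + \beta_t \Sigma_{t-1}\eta$. I would prove a trajectory-deviation bound by induction on the horizon $h$: let $\Delta_h$ denote the (expected, worst-case over the coupling of noise) state discrepancy between the two rollouts at step $h$. At each step the discrepancy grows by two sources --- the one-step model error, which is at most $\|f(s_h,\mathbf{a}_h) - \mu_{t-1}(s_h,\mathbf{a}_h)\| + \beta_t\|\Sigma_{t-1}(s_h,\mathbf{a}_h)\eta\| \leq 2\beta_t \|\sigma_{t-1}(s_h,\mathbf{a}_h)\|$ under Assumptions~\ref{ass:calibrated}, and the amplification of the existing gap $\Delta_{h-1}$ through the Lipschitz constants of the dynamics and policy. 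Using Assumption~\ref{ass:lipschitzness}, a state mismatch propagates through $\bm{\pi}_t$ (factor $L_\pi$, with an extra $N$-type factor from the joint action aggregation across agents), then through $f$ (factor $L_f$) or through $\beta_t\sigma_{t-1}$ (factor $\beta_t L_\sigma$). Unrolling this recursion gives a bound of the form $\Delta_h \lesssim \sum_{j} (\text{amplification})^{h-j} \cdot \beta_t\|\sigma_{t-1}(s_j,\mathbf{a}_j)\|$, and converting the state discrepancy into a reward-value gap via $L_r$ produces the $\bar{L}$ prefactor with its characteristic $N^{H/2}L_\pi^{H/2}(\bar\beta^H L_\sigma^H + L_f^H)$ structure --- the exponential-in-$H$ blowup is the price of compounding Lipschitz amplification over the horizon.

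Finally I would aggregate. Summing the per-round bound $r_t^i \lesssim \bar{L} \sum_{h} \|\sigma_{t-1}(s_h^t,\mathbf{a}_h^t)\|_2$ over $t=1,\ldots,T$ and applying Cauchy--Schwarz (first over the $H$ terms within a round, then over the $T$ rounds) yields
\begin{equation*}
R^i(T) \lesssim \bar{L}\sqrt{H} \sqrt{T \sum_{t=1}^T \sum_{h} \|\sigma_{t-1}(s_h^t,\mathbf{a}_h^t)\|_2^2} \leq \bar{L} H^{1.5}\sqrt{T \mathcal{I}_T}\,,
\end{equation*}
where the last inequality uses that the realized datasets $\mathcal{D}_t$ (each of size $H$) are feasible in the maximization defining $\mathcal{I}_T$ in Eq.~\eqref{eq:complexity_measure}, together with one factor of $H$ from the inner Cauchy--Schwarz and the identification of $\|\sigma\|_2^2$ with the summand in $\mathcal{I}_T$. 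The $\log(1/\delta)$ term in $\bar{L}$ absorbs the failure probability of Assumption~\ref{ass:calibrated} and any concentration needed to handle the expectations under the (possibly randomized) CCE policies. The main obstacle I anticipate is the inductive trajectory-deviation argument: carefully coupling the true and hallucinated rollouts so that the one-step errors are measured at the \emph{same} visited state-action pairs that enter $\mathcal{I}_T$, while correctly tracking how the $N$-agent joint-action Lipschitz dependence and the two distinct amplification channels ($L_f$ versus $\beta_t L_\sigma$) combine to give exactly the stated $\bar{L}$.
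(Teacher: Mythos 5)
Your proposal is correct and follows essentially the same route as the paper's proof: the CCE property of the hallucinated game combined with the optimism of $\textup{UCB}^i_{t-1}$ (Lemma~\ref{lem:Vf_confidence_lemma}) reduces the per-round regret to $\mathbb{E}_{\bm{\pi}\sim\bm{\mathcal{P}}_t}[\textup{UCB}^i_{t-1}(\bm{\pi})-V^i(\bm{\pi})]$, which is then bounded via the Lipschitz trajectory-deviation argument (the paper packages your induction as Fact~\ref{fact:lipschitzness_cl} and Lemma~\ref{lem:diff_ucb_Vf}, invoking Lemma~4 of \citet{curi2020} for the recursion), followed by Cauchy--Schwarz and a concentration step relating the expected confidence widths to the realized datasets entering $\mathcal{I}_T$. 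The only difference is presentational bookkeeping of where the factors of $H$ arise, which does not affect the final bound.
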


Theorem~\ref{thm:thm1} shows that the overall regrets' rate, as a function of the interaction rounds $T$, depends on the growth rate of $\mathcal{I}_T$ and hence on the generalization ability of the statistical model (if the model does not generalize at all, we expect $\mathcal{I}_T \geq \Omega(T)$ and the agents' regrets grow superlinearly). In case of GP models, one has $\bar{\beta} = \mathcal{O}(\sqrt{\gamma_{HT}})$ and $\mathcal{I}_T \leq pH\gamma_{HT}$, yielding overall regret rates of $R^i(T) \leq \mathcal{O}(N^{H/2} H^{2} \sqrt{pT}\, \gamma_{HT}\,^{(H+1)/2})$. Substituting the bounds on the maximum information gain $\gamma_{HT}$, one obtains \emph{sublinear} regrets for commonly used kernels such as the linear and squared exponential kernel. We note that the regret rates of Theorem~\ref{thm:thm1} generalize the single-agent optimization guarantee of \citet{curi2020}, with an additional multiplicative factor of $\mathcal{O}(N^{H/2})$ representing the price of dealing with a multi-agent environment. Finally, note that Theorem~\ref{thm:thm1} assumes that an exact CCE is computed at each round (Line 2 of Algorithm~\ref{alg:H_MARL}), but in Appendix~\ref{app:appendix_proof_thm} we discuss the more general case where only $\epsilon_t$-CCEs are obtained.

While Theorem~\ref{thm:thm1} concerns the agents' performance throughout the interaction rounds, the next theorem provides an \emph{offline} sample complexity guarantee (i.e., on the performance achieved after $T$ rounds) for \textsc{H-MARL}, characterizing a sufficient number of rounds $T$ to reach an $\epsilon$-CCE of the underlying Markov game. First, for each joint policy $\bm{\pi}$ and each agent $i$, we define the \emph{lower} confidence estimate $\textup{LCB}^i_t(\bm{\pi})$ as the solution of Eq.~\eqref{eq:UCB_Vf_modelbased} where the outer maximization is replaced by a minimization over $\eta(\cdot)$.
\begin{thm}[Offline performance - Convergence to $\epsilon$-CCE]\label{thm:thm2}
Let Assumptions~\ref{ass:calibrated} and~\ref{ass:lipschitzness} be satisfied and assume we run \textsc{H-MARL} for $T$ rounds. Moreover, consider distribution $\bm{\mathcal{P}}_{t^\star}$ such that 
$$t^\star = \arg \min_{t \in [T]} \max_{i\in [N]} \mathop{\mathbb{E}}_{\bm{\pi} \sim \bm{\mathcal{P}}_{t}}[ \textup{UCB}^i_{t-1}(\bm{\pi})-\textup{LCB}^i_{t-1}(\bm{\pi})] .$$
For a given accuracy $\epsilon \geq 0$, if 
$$ T \geq  \Omega\left( \frac{1}{\epsilon^2} \cdot \bar{L}^2 H^3 \mathcal{I}_T\right),$$
then $\bm{\mathcal{P}}_{t^\star}$ is an $\epsilon$-CCE of the underlying Markov game with probability at least $1-\delta$.
\end{thm}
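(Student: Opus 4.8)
The plan is to reduce the claim that $\bm{\mathcal{P}}_{t^\star}$ is an $\epsilon$-CCE of the \emph{true} Markov game to a statement about the per-round width $w^i_t := \mathbb{E}_{\bm{\pi}\sim\bm{\mathcal{P}}_t}[\textup{UCB}^i_{t-1}(\bm{\pi}) - \textup{LCB}^i_{t-1}(\bm{\pi})]$, and then to control $\max_i w^i_{t^\star}$ using exactly the uncertainty-propagation machinery behind Theorem~\ref{thm:thm1}. The two ingredients I would rely on are: (i) the sandwich $\textup{LCB}^i_{t-1}(\bm{\pi}) \leq V^i(\bm{\pi}) \leq \textup{UCB}^i_{t-1}(\bm{\pi})$, valid for all $\bm{\pi}$ and all $t$ on the $1-\delta$ event of Assumption~\ref{ass:calibrated} (this is Lemma~\ref{lem:Vf_confidence_lemma} together with its lower-bound counterpart for $\textup{LCB}$); and (ii) the fact that $\bm{\mathcal{P}}_t$ is, by construction (Line~2 of Algorithm~\ref{alg:H_MARL}), an exact CCE of the \emph{hallucinated} game with payoffs $\textup{UCB}^i_{t-1}(\cdot)$.

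First I would fix an agent $i$ and an arbitrary deviation $\pi^i \in \Pi^i$, and chain the two ingredients at round $t^\star$: the upper sandwich on the deviation value, then the hallucinated-CCE inequality of Def.~\ref{def:CCE}, then the relation $\textup{UCB}^i_{t^\star-1}(\bm{\pi}) - V^i(\bm{\pi}) \leq \textup{UCB}^i_{t^\star-1}(\bm{\pi}) - \textup{LCB}^i_{t^\star-1}(\bm{\pi})$ under $\bm{\mathcal{P}}_{t^\star}$:
\begin{align*}
\mathbb{E}_{\pi^{-i}\sim\bm{\mathcal{P}}_{t^\star}^{-i}}[V^i(\pi^i,\pi^{-i})]
&\leq \mathbb{E}_{\pi^{-i}\sim\bm{\mathcal{P}}_{t^\star}^{-i}}[\textup{UCB}^i_{t^\star-1}(\pi^i,\pi^{-i})] \\
&\leq \mathbb{E}_{\bm{\pi}\sim\bm{\mathcal{P}}_{t^\star}}[\textup{UCB}^i_{t^\star-1}(\bm{\pi})]
\leq \mathbb{E}_{\bm{\pi}\sim\bm{\mathcal{P}}_{t^\star}}[V^i(\bm{\pi})] + w^i_{t^\star}.
\end{align*}
Since $\pi^i$ is arbitrary, this shows $\bm{\mathcal{P}}_{t^\star}$ is a $(\max_i w^i_{t^\star})$-CCE of the true game, so it suffices to prove $\max_i w^i_{t^\star} \leq \epsilon$.

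Next I would exploit the definition of $t^\star$ as the minimizer of $\max_i w^i_t$, which bounds the minimum by the average, $\max_i w^i_{t^\star} = \min_{t\in[T]}\max_i w^i_t \leq \tfrac{1}{T}\sum_{t=1}^T \max_i w^i_t$, and reduce the problem to bounding the cumulative width $\sum_{t=1}^T \max_i w^i_t$. This is precisely the quantity controlled in the proof of Theorem~\ref{thm:thm1}: the per-round regret of every agent is bounded by $w^i_t$ through the same chain as above (with $V^i(\bm{\pi}_t)$ in place of the deviation value), and because the gap $w^i_t$ is bounded uniformly over $i$ by an agent-independent width (depending only on $L_r$ and the accumulated model uncertainty $\sigma_{t-1}$ along the hallucinated rollout induced by $\bm{\mathcal{P}}_t$), the maximum over agents inherits the same cumulative bound as the regret, namely $\sum_{t=1}^T \max_i w^i_t \leq \bar{L}H^{1.5}\sqrt{T\,\mathcal{I}_T}$. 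Combining with the averaging step gives $\max_i w^i_{t^\star} \leq \bar{L}H^{1.5}\sqrt{\mathcal{I}_T/T}$, and requiring this to be at most $\epsilon$ yields exactly $T \geq \Omega(\bar{L}^2 H^3 \mathcal{I}_T/\epsilon^2)$; all inequalities hold on the common $1-\delta$ event.

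The main obstacle is the cumulative-width bound $\sum_{t=1}^T \max_i w^i_t \leq \bar{L}H^{1.5}\sqrt{T\,\mathcal{I}_T}$, i.e.\ the core of Theorem~\ref{thm:thm1}. The delicate part there is to show that the hallucinated trajectory generated with $\mu_{t-1}+\beta_{t-1}\Sigma_{t-1}\eta$ stays close to the true one: one unrolls the dynamics step by step, uses Lipschitz continuity of $f$, $\pi$, and $\sigma_t$ (Assumption~\ref{ass:lipschitzness}) to bound how a discrepancy at step $h$ propagates to step $h+1$, and absorbs the resulting geometric blow-up into the factor $N^{H/2}L_\pi^{H/2}(\bar{\beta}^H L_\sigma^H + L_f^H)$ inside $\bar{L}$. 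The per-step gap is then of order $\beta_{t-1}\|\sigma_{t-1}(s,\mathbf{a})\|_2$, and a Cauchy--Schwarz step over the $HT$ visited state-action pairs converts $\sum_t \sum_h \|\sigma_{t-1}\|_2$ into $\sqrt{HT}\sqrt{\mathcal{I}_T}$ via the definition of $\mathcal{I}_T$ in~\eqref{eq:complexity_measure}; this is the only place the complexity measure enters, and it is what ultimately produces the $\sqrt{\mathcal{I}_T/T}$ decay driving the sample-complexity rate.
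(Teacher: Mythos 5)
Your proposal is correct and follows essentially the same route as the paper's proof: reduce the $\epsilon$-CCE claim to bounding the confidence width $\mathbb{E}_{\bm{\pi}\sim\bm{\mathcal{P}}_{t^\star}}[\textup{UCB}^i_{t^\star-1}(\bm{\pi})-\textup{LCB}^i_{t^\star-1}(\bm{\pi})]$ via the sandwich property and the hallucinated-CCE inequality, bound the minimum-over-$t$ width by the time average, and control the cumulative width with the same Lipschitz/uncertainty-propagation and Cauchy--Schwarz machinery as Theorem~\ref{thm:thm1} (the paper picks up an extra factor of $2$ from bounding $|\textup{UCB}-V|$ and $|\textup{LCB}-V|$ separately, which is absorbed in the $\Omega(\cdot)$).
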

The proof of Theorem~\ref{thm:thm2} can be found in Appendix~\ref{app:appendix_proof_thm}. Intuitively, it is obtained by showing that, after $T$ rounds, distribution $\bm{\mathcal{P}}_{t^\star}$ 
is an $\epsilon_T$-CCE of the Markov game, where the approximation factor $\epsilon_T$ is bounded as $\epsilon_T \leq  \mathcal{O}\big(T^{-\frac{1}{2}}\bar{L} H^{1.5} \sqrt{\mathcal{I}_T}\big)$. This implies the result.

\subsection{Practical implementation via sampling}\label{sec:approximation} 
Although the optimistic game values in Eq.~\eqref{eq:UCB_Vf_modelbased} are hard to compute, we propose the following more practical approximation:
\begin{align}
    \widetilde{\textup{UCB}}^i_t(\bm{\pi}) & =  \mathop{\mathbb{E}}\left[\sum_{h=0}^{H-1} r^i(s_h^\star, \mathbf{a}_h)\right] \label{eq:UCB_practical}\\
& \text{s.t.}  \quad \eta_h^j \sim \text{Unif}([-1,1]^p), \quad j=1,\ldots, Z \nonumber\\ 
& \hspace{2.3em} s_{h}^j = \mu_{t}(s^\star_{h-1},\mathbf{a}_{h-1}) \nonumber \\ & \hspace{5em}+ \beta_t \cdot  \Sigma_{t}(s^\star_{h-1},\mathbf{a}_{h-1})\cdot \eta_h^j + w_h \nonumber \\ 
& \hspace{2.3em} s_h^\star = \arg\max_{s_h^j, j\in \{1,\ldots, S\}} r^i(s_h^j, \bm{\pi}(s_h^j)) \label{eq:practical_maximiz} \\ 
&\hspace{2.3em}  \mathbf{a}_h = \bm{\pi}(s_h^\star)\,. \nonumber
\end{align}
For any given joint policy $\bm{\pi} \in \bm{\Pi}$, function $\widetilde{\textup{UCB}}^i_t(\bm{\pi})$ approximates the optimistic value function $\textup{UCB}^i_t(\bm{\pi})$ (computed as in \eqref{eq:UCB_Vf_modelbased}) via the following sampling method: At each time $h$, a finite set of $Z$ auxiliary parameters $\{\eta_h^j, j=1,\ldots, Z \}$ is sampled uniformly from $[-1,1]^p$. These are used to compute $Z$ plausible states $s_h^j, j=1,\ldots, Z$, according to the previous state, joint actions, and the hallucinated transition model. Among these, only the state $s_h^\star$ leading to the largest reward is selected and used for the next transition. This process continues up to time $t=H-1$. Further, the outer expectation can be approximated by taking the empirical mean over multiple episodes.

\looseness -1 Note that $\widetilde{\textup{UCB}}^i_t(\bm{\pi}) \leq \textup{UCB}^i_t(\bm{\pi})$ for all $\bm{\pi}$, since the computed state sequences $s^\star_h$ are feasible trajectories for the maximization problem in Eq.~\eqref{eq:UCB_Vf_modelbased}. However, Eq.~\eqref{eq:UCB_practical} offers significant computational advantages: 1) the intractable maximization over functions $\eta(\cdot)$ is replaced by selecting the best out of finitely many samples, 2) the optimization over the cumulative reward is broken down into greedily optimizing the step-wise rewards. While the first approximation can be alleviated by considering a large number of samples, the second one can be problematic in case of sparse rewards (i.e., where the optimal sequence of parameters $\eta_h^j$ depends on the whole state trajectory). In such a case, instead of greedily selecting state $s_h^\star$ as in Eq.~\eqref{eq:practical_maximiz}, one could keep track of $Z$ parallel trajectories $\{s_h^j\}_{h=0}^{H-1}, j=1,\ldots,Z$, according to the sampled auxiliary variables $\eta^j_h$ and, only after time $H$, select $\{s_h^\star\}_{h=0}^{H-1}$ to be the one leading to the largest cumulative reward. In the latter approach, however, a larger $Z$ should be selected to obtain reasonable approximations.

\begin{figure*}[ht]
\centering
\begin{subfigure}[b]{0.38\textwidth}
\includegraphics[height=.5\textwidth]{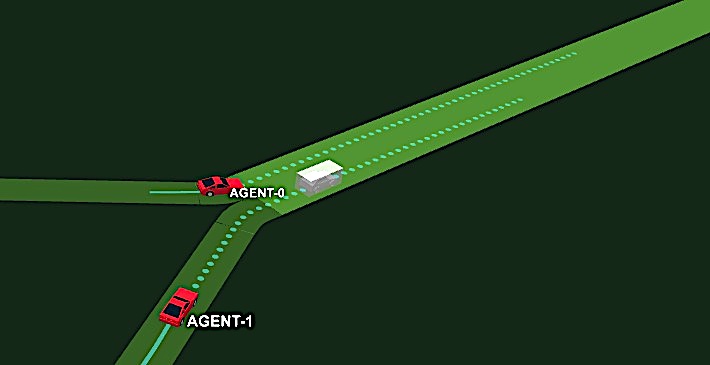}
\caption{Lane merging scenario}
\label{fig:double_merge_scenario}
\end{subfigure}
\hspace{2.5em}
\begin{subfigure}[b]{0.38\textwidth}
\includegraphics[height=.5\textwidth]{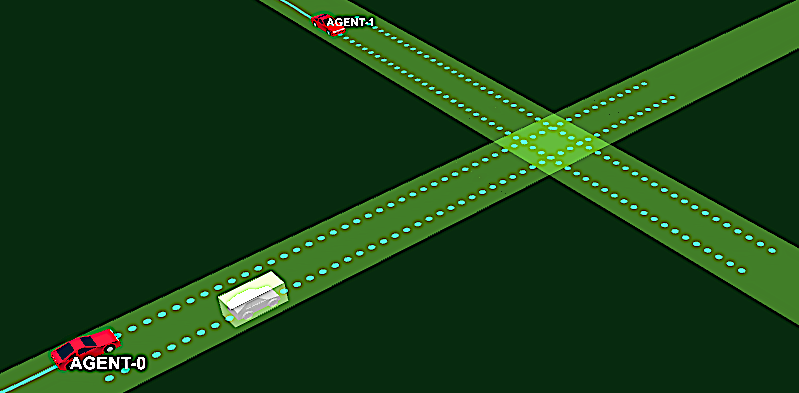}
\caption{Intersection scenario}
\label{fig:intersection_scenario}
\end{subfigure}
\vspace{-0.9em}
\caption{(a) \textbf{Lane merging scenario:} both agents want to maximize progress, while AGENT-$0$  also wants to merge into AGENT-$1$'s lane. The grey car is a human-driven (HD) vehicle with random initial speed. (b) \textbf{Intersection scenario:} AGENT-0 wants to turn right, while AGENT-1 wants to proceed straight on its lane. The HD vehicle has a random initial speed and wants to cross the intersection. }
\label{fig:scenarios}
\end{figure*}

\section{Experiments: Autonomous driving SMARTS benchmark}\label{sec:experiments}
We demonstrate our approach on an autonomous driving application. The goal is to find successful equilibrium policies for Autonomous Vehicles (AVs) when driving on common roads. Crucially, once deployed, AVs must interact with other non-controllable human-driven (HD) vehicles, which can significantly affect the AVs' performance. We make the realistic assumption that at training time the behavior of HD vehicles is a-priori {\em unknown} and can only be inferred by {\em online interactions} with the real-world, or with an expensive simulator~\citep[e.g., requiring humans-in-the-loop, cf.,][]{reddy2018shared}. This problem can be naturally formulated according to our multi-agent RL framework. Indeed, we show below that myopically considering it as a single-agent RL problem leads to significantly inferior performance. The transition function $f(\cdot)$ includes the unknown HD vehicles' behavior (e.g., how do they act at any given time, given the current environment state) and can be learned from observed trajectories from past rounds $1,\ldots, t-1$. Note that the environment transition function also includes the AV dynamics, which are assumed to be known for simplicity.

Hence, given a pre-designed driving scenario, we seek to evaluate the performance of the policies computed by \textsc{H-MARL} for the AVs after each interaction with the environment. In particular, we are interested in comparing the proposed optimistic exploration strategy with the greedy and perhaps most naive approach of using the predictive posterior mean of $f(\cdot)$ to plan for the next round. We also consider Thompson Sampling (TS)~\cite{russo2018tutorial} as a natural exploration baseline: in our setup TS consists of sampling, at each policy evaluation step, state $s_{h+1}$ from the posterior Gaussian distribution with mean $\mu_t(s_h,\mathbf{a}_h)$ and covariance $\Sigma_t(s_h,\mathbf{a}_h)$. Note that this is substantially different from the proposed optimistic \textsc{H-MARL} where, among plausible next states, we select the ones leading to the highest agents' reward. Moreover, to assess the quality of learning, we compare all these approaches with the idealized benchmark of knowing the HD model in advance. 
First, we describe the used environment, the agents' specifications, and the HD vehicles' model.\looseness=-1

\textbf{SMARTS environment.} We run our experiments using the open-source SMARTS autonomous driving platform~\cite{zhou2020smarts}, a recently proposed benchmark for MA-RL research. SMARTS provides realistic simulations for autonomous vehicles on configurable road environments, as well as interactions with background traffic vehicles, which for our scope represent HD vehicles. Vehicle dynamics are simulated by the \textit{Bullet physics engine}~\cite{coumans2021}, while SUMO~\cite{krajzewicz2002sumo} is the background traffic provider. At a higher level, SMARTS is integrated with the reinforcement learning RLlib~\cite{liang2018rllib} library, allowing us to train deep RL policies for the AVs (agents) using existing MA-RL baselines.

\textbf{Observations, policies, rewards.} Each agent is assigned a mission, which is represented by a start and a goal position. Simulation steps are of $0.1$s and, at each step, the state observed by agent~$i$ consists of: relative position to the goal, the distance to the lane's center, speed, steering, and heading errors, and the states of its neighboring vehicles. Each agent has a discrete action space: $\{ \text{keep lane}, \text{slow down}, \text{turn right}, \text{turn left}\}$ and a policy parametrized by a deep neural network with 2 hidden layers of 256 units and $\tanh$ activations (we use default policies from~\citet{zhou2020smarts}). The reward of agent~$i$ at each time $h$ is
$r^i(s_h,\mathbf{a}_h) =  r^i_\text{bonus}(s_h,\mathbf{a}_h) - r^i_\text{penalty}(s_h,\mathbf{a}_h)$,
where $r_\text{bonus}$ rewards progress (i.e., driven distance) and reaching the goal, while $r_\text{penalty}$ penalizes acceleration, sharp turns, collisions, and distances from lane center and goal.

\textbf{HD vehicles' model.} \looseness -1 At test time, HD vehicles are controlled by the SUMO traffic provider, which is a black box for our purposes. To learn their behavior, we use a GP model which maps the current HD state and its relative position to the other vehicles, to the next HD state (i.e., one-step-ahead prediction). We use a Mat\'ern kernel for predicting the speed change (as we expect it to be rather nonsmooth), and a squared exponential kernel to predict its change in position. 

\textbf{\textsc{H-MARL} implementation.} 
To compute the equilibrium policies at each round $t$ (Line 2 of Algorithm~\ref{alg:H_MARL}), we use independent Deep Q-Networks~\citep[DQN,][]{mnih2015human} and we let $\bm{\mathcal{P}}_{t+1}$ be the uniform distribution over the last 4 policy iterations (or checkpoints). This avoids non-convergence behaviors and mimics the way CCEs are computed by independent no-regret learning in normal-form games, see, e.g., \citet{cesa2006prediction}. We chose independent DQN for its computational efficiency and because it was shown by \citet{zhou2020smarts} to find good driving policies. 
The hallucinated optimistic value functions $\text{UCB}_t^i(\cdot)$ are approximated by the sampling approach of Eq.~\eqref{eq:UCB_practical} with $Z=5$ samples at each time step and $\beta_t = 1.0$. This effectively corresponds to sampling, at each policy evaluation step, $Z$ plausible HD vehicles' states and selecting the one that leads to the highest reward.
Additional details of our experimental setup are provided in Appendix~\ref{app:appendix_experiments}.\looseness=-1 

\subsection{Results}
We consider two different realistic scenarios corresponding to lane merging and intersection.

\begin{figure}[t!]
\centering
\begin{subfigure}[b]{0.23\textwidth}
\includegraphics[width=\textwidth]{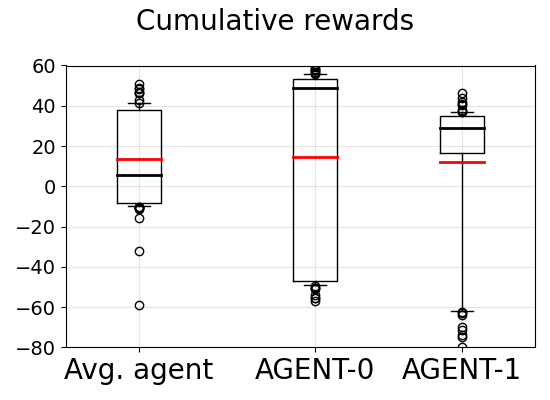}
\caption{Single-agent optima.}
\label{fig:double_merge_completion rates_1}
\end{subfigure}
\hspace{0.3em}
\begin{subfigure}[b]{0.23\textwidth}
\includegraphics[width=\textwidth]{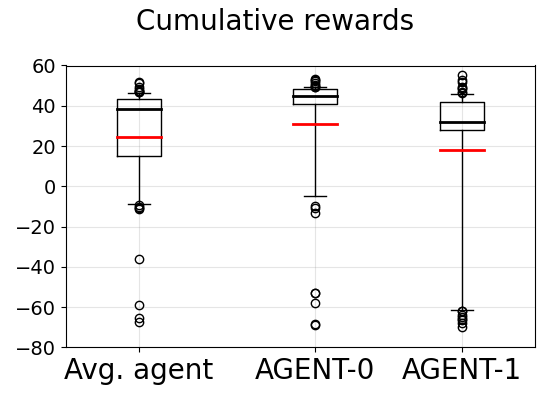}
\caption{Multi-agent equilibrium.}
\label{fig:double_merge_completion rates_2}
\end{subfigure}
\vspace{-0.3em}
\caption{\textbf{Lane merging scenario.} MA-RL equilibrium policies (b) lead to higher average and individual rewards for the agents, with respect to using single-agent optimized policies (a). Boxplots of 15-85$^{th}$ percentiles over 50 runs (mean in red, median in black).}
\label{fig:equilibrium_benefit}
\end{figure}

\begin{figure*}[t!]
\centering
\begin{subfigure}[b]{0.38\textwidth}
\includegraphics[width=\textwidth]{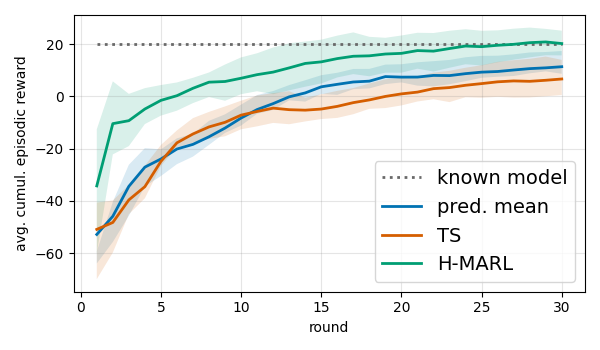}
\vskip-5pt
\caption{Lane merging scenario}
\label{fig:double_merge_cumulative_learning_1}
\end{subfigure}
\hspace{1.5em}
\begin{subfigure}[b]{0.38\textwidth}
\includegraphics[width=\textwidth]{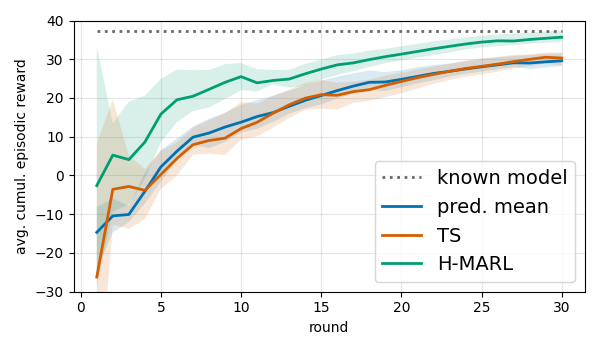}
\vskip-5pt
\caption{Intersection scenario}
\label{fig:double_merge_cumulative_learning_2}
\end{subfigure}
\vspace{-0.9em}
\caption{Average game values (mean and 30-70$^{th}$ percentiles) as a function of the interaction rounds, when equilibrium policies are computed according to the optimistic \textsc{H-MARL} algorithm, Thompson Sampling (TS) exploration strategy, or when using the predictive posterior mean about the transition function (no explicit exploration). We also compare against the idealized benchmark of knowing the HD model in advance.} 
\label{fig:double_merge_cumulative_learning}
\end{figure*}

\textbf{Lane merging scenario.} The lane merging scenario is depicted in Figure~\ref{fig:double_merge_scenario}. There are two agents (AGENT-0 and AGENT-1) whose goal is to maximize progress (i.e., to reach a goal position at the end of the road), while AGENT-0's goal is also to merge into AGENT-1's lane within a horizon of $H=150$ steps. 
In addition, a HD vehicle (grey car) drives on its lane with a random initial speed. We expect that depending on its speed, the agents coordinate to either overtake the HD vehicle and merge, or drive behind it while allowing AGENT-0 to merge.

\textbf{Intersection scenario.} The intersection scenario is depicted in Figure~\ref{fig:intersection_scenario}. AGENT-0 (bottom part of the figure) wants to turn right, while AGENT-1's goal is to proceed straight on its lane. The HD vehicle has a random initial speed and wants to cross the intersection. Horizon is of $H=150$ steps.\looseness=-1

First, we demonstrate the superior performance of modeling the problem via MA-RL, as opposed to optimizing single-agent policies. We consider the lane merging scenario and assume the HD model is known. Then, we i) compute a game equilibrium $\bm{\mathcal{P}}_\star$ as outlined above and ii) optimize single agent policies $\pi_\star^i$. To obtain $\pi_\star^i$, we consider a single-agent environment for agent~$i$, replacing the other agent with a HD vehicle driving on the same lane. 
We expect ii) to produce inferior policies for the agents, since the learned policies neglect the presence of other AVs and thus miss any opportunity for coordination.
In Figure~\ref{fig:equilibrium_benefit} we report agents' reward from 50 scenario evaluations when the used policies come from either of the two approaches. Equilibrium policies lead to higher average and individual rewards for the agents, consistent with our intuition. In particular, the optimal single-agent policy for AGENT-1 often consists of breaking and driving behind the HD vehicle. Instead, we observed that under equilibrium policies both agents learn better coordination maneuvers allowing AGENT-1 to more often overtake (see Appendix~\ref{app:appendix_experiments} for additional details on this).\looseness=-1

\begin{table}[t!]
\small
  \begin{center}
    \begin{tabular}{r|c|c}
     & \shortstack{Avg. completion rate \\ during learning} & \shortstack{Avg. completion time\\ during learning}\\[0.2em]
     \hline
     \rule{0pt}{1.0\normalbaselineskip}
     pred. mean & 72.0 $\%$ & 8.90 s \\[0.3em]
      TS & 69.9 $\%$ & 8.87 s
      \\[0.3em]
      \textsc{H-MARL} &  \textbf{80.9} $\%$ &  \textbf{8.66} s\\
      \hline
    \end{tabular}
        \vspace{0.4em}

    (a) Lane merging scenario
    
    \vspace{0.6em}
    
        \begin{tabular}{r|c|c}
 & \hphantom{Avg. completion rate} & \hphantom{Avg. distance to goal}\\[-0.7em]
     \hline
          \rule{0pt}{1.0\normalbaselineskip}
      pred. mean & 88.8 $\%$& 9.22 s \\[0.3em]
      TS & 87.4 $\%$& 9.04 s
      \\[0.3em]
      \textsc{H-MARL} &  \textbf{91.9} $\%$ & \textbf{9.02} s\\
      \hline
    \end{tabular}
            \vspace{0.7em}

    (b) Intersection scenario
  \end{center}
  \vspace{-0.9em}
          \caption{Average (across all learning rounds) of the agents' mission completion rates (higher is better) and episodes' completion time (lower is better), when using ``pred. mean'', TS, or \textsc{H-MARL}.}
           \label{table_1}
          \vspace{-1.2em}
\end{table}

Motivated by the above considerations, we evaluate the proposed $\textsc{H-MARL}$ approach in computing equilibrium policies at each round.
To evaluate the performance of a joint distribution $\bm{\mathcal{P}}_t$, we use the average agents' cumulative reward  $\bar{V}(\bm{\mathcal{P}}_t) = \mathbb{E}_{\bm{\pi} \sim \bm{\mathcal{P}}_t} \frac{1}{N}\sum_{i=1}^N \sum_{h=0}^{H-1}r^i(s_h,\bm{\pi}(s_h))$, since we expect this to be high at equilibria (\citet{zanardi2021urban} showed that this is typical of driving games under some assumptions).

In Figure~\ref{fig:double_merge_cumulative_learning}, we plot the average game values $\frac{1}{T}\sum_{t=1}^T \bar{V}(\bm{\mathcal{P}}_t)$ as a function of the interaction rounds $T$, when using \textsc{H-MARL}, Thompson Sampling (TS) exploration, or the predictive posterior mean to compute the agents' policies at each round (we denote this approach as ``pred. mean"), respectively. After $\approx 30$ interaction rounds (corresponding to $\approx 3000$ observed input-output data points to learn the HD model), \textsc{H-MARL} returns equilibrium policies that have comparable performance to knowing the exact HD model in advance (see Figure~\ref{fig:double_merge_overtake} in Appendix~\ref{app:appendix_experiments} for an illustration of successful merging and crossing maneuvers that result from our approach). Moreover, it displays consistently faster learning curves with respect to the
``pred. mean'' and TS baselines. This is due to the fact that \textsc{H-MARL} encourages the AVs to optimistically explore different HD vehicle's behaviors, i.e., the diverse ways the HD vehicle can change its speed based on their position. Instead, the ``pred. mean'' and TS baselines learn about it only indirectly, by greedily choosing the best policies according to the posterior beliefs coming from past observed trajectories. Further differences between the three approaches can also be observed in Table~\ref{table_1}. There, we report the averaged completion rates (i.e., when goal positions are reached) and the average episodes' completion time (faster goal reaching corresponds to higher rewards for the agents) experienced by the agents across all the learning rounds. The policies computed by \textsc{H-MARL} lead to higher completion rates and lower completion times overall.\looseness=-1

In Appendix~\ref{app:appendix_experiments}, we further compare our model-based approach with \emph{model-free} methods. The latter neglect the underlying problem structure, requiring a significantly larger number of environment interactions to achieve comparable rewards.\looseness=-1

\section{Conclusions}
We have considered a model-based multi-agent reinforcement learning problem, where the environment transition function is unknown and can only be learned by costly interactions with the environment. We have proposed \textsc{H-MARL} (Hallucinated Multi-Agent Reinforcement Learning), a novel sample-efficient algorithm that can provably balance exploration with exploitation. \textsc{H-MARL} constructs statistical confidence bounds around the unknown transition function and uses them to build a hallucinated optimistic game for the agents. We have theoretically analyzed our approach by bounding the agents' dynamic regret and deriving a sufficient number of iterations to converge to approximate equilibria of the underlying Markov game. To the best of our knowledge, ours are the first guarantees in general-sum Markov games with continuous states and actions. We have demonstrated our approach on an autonomous driving simulation benchmark, where it showed fast convergence and outperformed non-optimistic and model-free methods. 

\section*{Acknowledgments}
This project received funding from the Swiss National Science Foundation, under the grant SNSF $200021$\textunderscore$172781$ and the NCCR Automation grant 5$1$NF$40$ $180545$, and by the European Union's ERC grant $815943$.

\bibliography{biblio}
\bibliographystyle{icml2021}

\appendix
\onecolumn

\section{Proof of Theorems~\ref{thm:thm1} and \ref{thm:thm2} }\label{app:appendix_proof_thm}
In this section, we prove Theorems~\ref{thm:thm1} and \ref{thm:thm2}. Their proofs strongly rely on two main lemmas, which we articulate next.

\subsection{Confidence Lemma}
The following main lemma shows that if the model is calibrated, the functions computed in Eq.~\eqref{eq:UCB_Vf_modelbased} are indeed an upper confidence bound on the value functions of each agent~$i$. 
\begin{lem}\label{lem:Vf_confidence_lemma}
Under Assumption~\ref{ass:calibrated}, with probability at least $1-\delta$ we have that for all $t\geq 0$,
\begin{equation*}
    \text{UCB}^i_t(\bm{\pi}) \geq V^i(\bm{\pi}) \quad \forall i \in [N], \quad \forall \bm{\pi} \in \bm{\Pi},  \quad \forall t\geq0\,.
\end{equation*}
\end{lem}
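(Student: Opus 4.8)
The plan is to exploit a \emph{realizability} argument: I want to show that the true environment dynamics \eqref{eq:transition} is one of the ``plausible'' hallucinated dynamics captured by the feasible set of the maximization problem \eqref{eq:UCB_Vf_modelbased}. Since $\textup{UCB}^i_t(\bm{\pi})$ is a maximum over the auxiliary functions $\eta(\cdot)\in[-1,1]^p$, it suffices to exhibit a single feasible $\eta^\star(\cdot)$ under which the hallucinated rollout reproduces the true rollout; the objective evaluated at $\eta^\star$ is then exactly $V^i(\bm{\pi})$, and optimality of the max gives the inequality. First I would condition on the calibration event of Assumption~\ref{ass:calibrated}, which holds with probability at least $1-\delta$ and, crucially, holds \emph{uniformly} over all $(s,\mathbf{a})$ and all $t$ simultaneously. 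Everything that follows is deterministic given this event.

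\textbf{Construction of the reproducing $\eta^\star$.} On the calibration event, for each coordinate $j\in\{1,\dots,p\}$ and each $(s,\mathbf{a})$ I would set, pointwise,
\begin{equation*}
\eta^\star_j(s,\mathbf{a}) \;=\; \frac{f_j(s,\mathbf{a}) - \mu_{t,j}(s,\mathbf{a})}{\beta_t\,\sigma_{t,j}(s,\mathbf{a})},
\end{equation*}
with the convention that the ratio is $0$ whenever $\sigma_{t,j}(s,\mathbf{a})=0$ (in that degenerate case calibration forces $f_j=\mu_{t,j}$, so the numerator also vanishes and the choice is harmless). Assumption~\ref{ass:calibrated} gives $|f_j-\mu_{t,j}|\le \beta_t\sigma_{t,j}$ coordinate-wise, so $\eta^\star(s,\mathbf{a})\in[-1,1]^p$ for every $(s,\mathbf{a})$; hence $\eta^\star$ is feasible for \eqref{eq:UCB_Vf_modelbased}. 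By construction the hallucinated transition under $\eta^\star$ satisfies $\mu_t(s,\mathbf{a})+\beta_t\Sigma_t(s,\mathbf{a})\,\eta^\star(s,\mathbf{a}) = f(s,\mathbf{a})$ (reading $\Sigma_t$ as the diagonal matrix of the $\sigma_{t,j}$, consistent with $\sigma_t=\mathrm{diag}(\Sigma_t)$). A short induction on $h$, coupling the noise $w_h$ of the hallucinated rollout to that of the true rollout, then shows the two state trajectories coincide pathwise: if the states agree up to step $h-1$ and the policy fixes $\mathbf{a}_{h-1}=\bm{\pi}(s_{h-1})$ identically, the transition maps agree, so $s_h$ agrees. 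Consequently the expected cumulative reward under $\eta^\star$ equals $V^i(\bm{\pi})$, and since $\textup{UCB}^i_t(\bm{\pi})$ maximizes over feasible $\eta$, we obtain $\textup{UCB}^i_t(\bm{\pi}) \ge V^i(\bm{\pi})$.

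\textbf{Uniformity and the main obstacle.} The key point making the ``for all $i,\bm{\pi},t$'' quantifier free is that $\eta^\star$ depends only on $(s,\mathbf{a})$ and on the round-$t$ model $(\mu_t,\sigma_t)$, \emph{not} on the agent index $i$ nor on the policy $\bm{\pi}$; the same reproducing function therefore works simultaneously for every agent and every joint policy, and the single calibration event covers all rounds $t$. I expect the main technical care to be in (i) the uniformity/measurability of $\eta^\star$ as a genuine function on $\mathcal{S}\times\bm{\mathcal{A}}$ rather than a per-trajectory choice, which is what lets the hallucinated rollout actually realize the true one at \emph{every} reachable state, and (ii) the noise-coupling step, where one must argue that matching the drift terms makes the trajectories identical in law (and pathwise under the coupling) so the expectations agree. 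The degenerate $\sigma_{t,j}=0$ case and the interpretation of $\Sigma_t$ as diagonal are the only other bookkeeping details; neither is deep, but both should be stated explicitly to keep the argument rigorous.
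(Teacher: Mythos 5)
Your proof is correct and follows essentially the same route as the paper's: both arguments condition on the calibration event and exhibit a feasible $\eta^\star(s,\mathbf{a}) = (f(s,\mathbf{a})-\mu_t(s,\mathbf{a}))/(\beta_t\sigma_t(s,\mathbf{a}))$ under which the hallucinated transition reproduces the true one, so that the true trajectory is feasible for the maximization in Eq.~\eqref{eq:UCB_Vf_modelbased} and optimality yields the bound. Your write-up is simply more explicit about the degenerate $\sigma_{t,j}=0$ case, the noise coupling, and the uniformity over $i$, $\bm{\pi}$, and $t$, all of which the paper leaves implicit.
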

\begin{proof}
Under Assumption~\ref{ass:calibrated}, we know that with probability $1-\delta$, for each $s,\mathbf{a}$, and $t$, $|f(s,\mathbf{a}) - \mu_{t-1}(s,\mathbf{a})|\leq \beta_t \sigma_t(s, \mathbf{a})$ holds coordinate-wise. Hence, there exists a $\eta(s,\mathbf{a}) \in [-1,1]^p$ such that $f(s,\mathbf{a}) = \mu_{t-1}(s,\mathbf{a}) + \beta_t \cdot \Sigma_t(s,\mathbf{a})\eta(s,\mathbf{a})$. That is, the hallucinated transition coincides with the true transition. Therefore, given agent~$i$ and joint policy $\bm{\pi}$, the true states' trajectory (evolving according to $f(\cdot)$) is a feasible solution to \eqref{eq:UCB_Vf_modelbased}. 

\end{proof}
\subsection{Bounding optimistic trajectories via Lipschitzness}

First, the following fact shows that under Assumption~\ref{ass:lipschitzness}, the closed-loop transition, reward, and confidence functions are also Lipschitz continuous.

\begin{fact}\label{fact:lipschitzness_cl}
Under the Lipschizness assumption, Assumption~\ref{ass:lipschitzness}, it holds: 
\begin{align*}
\| f(s,\bm{\pi}(s)) - f(s',\bm{\pi}(s')) \|_2  \leq L_f \sqrt{1 + N L_\pi} \cdot \|s-s'\|_2 , \\ 
         \| r^i(s,\bm{\pi}(s)) - r^i(s',\bm{\pi}(s')) \|_2  \leq L_r \sqrt{1 + N L_\pi} \cdot \|s-s'\|_2 \,, \quad  \forall i ,\\
          \| \sigma_t(s,\bm{\pi}(s)) - \sigma_t(s',\bm{\pi}(s')) \|_2  \leq L_\sigma \sqrt{1 + N L_\pi} \cdot \|s-s'\|_2 \,, \quad  \forall t \,.
\end{align*}
\end{fact}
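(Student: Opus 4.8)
The plan is to observe that each of the three closed-loop functions is the composition of a single ``state-to-joint-input'' map $\Phi(s) := (s, \pi^1(s), \ldots, \pi^N(s))$ with an outer function that Assumption~\ref{ass:lipschitzness} already assumes Lipschitz on $\mathcal{S} \times \bm{\mathcal{A}}$ (namely $f$, $r^i$, or $\sigma_t$). Since the composition of Lipschitz maps is Lipschitz with constant equal to the product of the individual constants, and since the three claims differ only in the outer constant ($L_f$, $L_r$, $L_\sigma$), I would prove one bound on the Lipschitz constant of $\Phi$ and then specialize to each function.

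First I would control $\Phi$ in the Euclidean norm of the product space. For any $s, s' \in \mathcal{S}$ the identity coordinate and the $N$ policy coordinates are stacked, so
\[
\|\Phi(s) - \Phi(s')\|_2^2 = \|s - s'\|_2^2 + \sum_{i=1}^{N} \|\pi^i(s) - \pi^i(s')\|_2^2 .
\]
Applying the $L_\pi$-Lipschitzness of each policy to the $i$-th summand bounds it by $L_\pi^2 \|s-s'\|_2^2$, giving $\|\Phi(s)-\Phi(s')\|_2 \le \sqrt{1 + N L_\pi^2}\,\|s-s'\|_2$; this is precisely the square-root ``$1+N(\cdot)$'' concatenation factor of the Fact, which arises from carrying the identity coordinate together with the $N$ policy coordinates inside a single $\|\cdot\|_2$.

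Second I would compose. Writing $f(s, \bm{\pi}(s)) = f(\Phi(s))$ and using the $L_f$-Lipschitzness of $f$ gives
\[
\|f(s, \bm{\pi}(s)) - f(s', \bm{\pi}(s'))\|_2 \le L_f \|\Phi(s) - \Phi(s')\|_2 \le L_f \sqrt{1 + N L_\pi^2}\,\|s-s'\|_2 ,
\]
and the identical argument with $L_r$ in place of $L_f$ (uniformly over $i \in [N]$) and with $L_\sigma$ in place of $L_f$ (uniformly over $t$) yields the remaining two inequalities.

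There is no genuinely hard step here: the result is a direct computation, and the only place that requires care is the norm bookkeeping in the first step. Concretely, I must ensure the input perturbation $\|\Phi(s)-\Phi(s')\|_2$ is measured in the same $\|\cdot\|_2$ on $\mathcal{S}\times\bm{\mathcal{A}}$ under which $f$, $r^i$, and $\sigma_t$ were assumed Lipschitz, and that the $N$ policy contributions are aggregated correctly (each adding $L_\pi^2\|s-s'\|_2^2$) alongside the identity term. I would also note that the uniformity over $t$ and over $i$ is automatic, since Assumption~\ref{ass:lipschitzness} supplies a single constant $L_\sigma$ valid for all $t$ and single constants $L_r, L_\pi$ valid across all agents.
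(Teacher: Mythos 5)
Your proof is correct and follows essentially the same route as the paper: bound the ``state-to-joint-input'' concatenation in the product Euclidean norm using the $L_\pi$-Lipschitzness of each policy, then compose with the outer Lipschitz function ($f$, $r^i$, or $\sigma_t$). One remark in your favor: your constant $\sqrt{1+NL_\pi^2}$ is what the computation actually yields, since $\|\pi^i(s)-\pi^i(s')\|_2^2 \le L_\pi^2\|s-s'\|_2^2$; the paper's derivation writes $NL_\pi\|s-s'\|_2^2$ at this step, apparently dropping a square, so its stated factor $\sqrt{1+NL_\pi}$ is only valid when $L_\pi\le 1$ (the discrepancy is immaterial to the downstream regret bounds, which absorb such constants into $\bar L$).
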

\begin{proof}
Using the Lipschitzness of $f(\cdot)$ and $\pi^i(\cdot)$ from Assumption~\ref{ass:lipschitzness}, we have:
\begin{align*}
    \| f(s,\bm{\pi}(s)) - f(s',\bm{\pi}(s')) \|_2 & \leq L_f \|(s-s',\bm{\pi}(s)-\bm{\pi}(s') )\|_2  \\
    & = L_f \sqrt{\|s-s'\|_2^2 + \sum_{i=1}^N\|\pi^i(s)-\pi^i(s') \|_2^2}\\
    & \leq  L_f \sqrt{\|s-s'\|_2^2 + N L_\pi\|s-s' \|_2^2}\\ 
    & = L_f \sqrt{1 + N L_\pi} \cdot \|s-s'\|_2.
    \end{align*}
The same derivation is obtained for the functions $r^i$ and $\sigma_t$ using Lipschitz constants $L_r$ and $L_\sigma$, respectively.
\end{proof}

Then, we use the above Lipschitz properties to bound the distance between the optimistic value functions $\text{UCB}^i_t(\bm{\pi})$ and the true functions $V^i(\bm{\pi})$. This will depend on the sum of accumulated standard deviations, as stated in the following main lemma. 
\begin{lem}\label{lem:diff_ucb_Vf}
Define $\bar{L}_f = 1 + (L_f+ 2 \beta_{t-1}L_\sigma)\sqrt{1 + NL_\pi}$ and consider any round $t$. For each agent~$i$ and joint policy $\bm{\pi}$, under Assumption~\ref{ass:calibrated} it holds
\begin{align}
    \left|\text{UCB}^i_{t-1}(\bm{\pi}) - V^i(\bm{\pi})\right| \leq 2\beta_{t-1} L_r \sqrt{1 + N L_\pi}\bar{L}_f^{H-1} H   \cdot \mathbb{E}_{\bm{\omega}}\left[\sum_{h=0}^{H-1} \|\sigma_{t-1}(s_h, \bm{\pi}(s_h))\|_2\right],
\end{align}
    where $\{s_h\}_{h=0}^{H-1}$ is the sequence of environment's states when agents play according to $\bm{\pi}$ and $\bm{\omega}$ is the vector of noise realizations $\bm{\omega} = [\omega_0, \ldots, \omega_{H-1}]$.
\end{lem}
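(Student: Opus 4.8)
The plan is to exploit the coupling between the true and hallucinated rollouts under a shared noise sequence, reduce the value gap to an accumulated trajectory discrepancy, and then control that discrepancy by a linear recursion whose solution produces the geometric factor $\bar L_f^{H-1}$.

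First, by the Confidence Lemma (Lemma~\ref{lem:Vf_confidence_lemma}) we have $\text{UCB}^i_{t-1}(\bm\pi) \ge V^i(\bm\pi)$ with probability $1-\delta$, so the absolute value can be dropped and it suffices to upper bound $\text{UCB}^i_{t-1}(\bm\pi) - V^i(\bm\pi)$. Let $\eta^\star(\cdot)$ denote a maximizer in~\eqref{eq:UCB_Vf_modelbased} and let $\{\tilde s_h\}$ be the induced hallucinated trajectory, while $\{s_h\}$ is the true trajectory under $f$; I would couple both with the \emph{same} noise realization $\bm\omega$. Writing the value gap as the expected sum of per-step reward differences $r^i(\tilde s_h,\bm\pi(\tilde s_h)) - r^i(s_h,\bm\pi(s_h))$ and invoking the closed-loop Lipschitz bound on $r^i$ from Fact~\ref{fact:lipschitzness_cl}, the problem reduces to bounding $\sum_h \Delta_h$ where $\Delta_h := \|\tilde s_h - s_h\|_2$, in expectation over $\bm\omega$.

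Next I set up the recursion for $\Delta_h$. Because the two rollouts share $\bm\omega$, the additive noise cancels in $\tilde s_{h+1} - s_{h+1}$, leaving $\mu_{t-1}(\tilde s_h,\bm\pi(\tilde s_h)) + \beta_{t-1}\Sigma_{t-1}(\tilde s_h,\bm\pi(\tilde s_h))\eta^\star_h - f(s_h,\bm\pi(s_h))$ with $\Delta_0 = 0$. Adding and subtracting $f(\tilde s_h,\bm\pi(\tilde s_h))$ splits this into a model-error term and a propagation term. The model-error term is bounded by $2\beta_{t-1}\|\sigma_{t-1}(\tilde s_h,\bm\pi(\tilde s_h))\|_2$: calibration (Assumption~\ref{ass:calibrated}) controls $\|f-\mu_{t-1}\|_2 \le \beta_{t-1}\|\sigma_{t-1}\|_2$ coordinate-wise, and $\eta^\star_h \in [-1,1]^p$ together with $\sigma_{t-1} = \mathrm{diag}(\Sigma_{t-1})$ gives $\|\beta_{t-1}\Sigma_{t-1}\eta^\star_h\|_2 \le \beta_{t-1}\|\sigma_{t-1}\|_2$. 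The propagation term is bounded by $L_f\sqrt{1+NL_\pi}\,\Delta_h$ via Fact~\ref{fact:lipschitzness_cl}. Finally I transfer the uncertainty from the hallucinated state back to the true state using the Lipschitz bound on $\sigma_{t-1}$ from Fact~\ref{fact:lipschitzness_cl}, replacing $\|\sigma_{t-1}(\tilde s_h,\bm\pi(\tilde s_h))\|_2$ by $\|\sigma_{t-1}(s_h,\bm\pi(s_h))\|_2 + L_\sigma\sqrt{1+NL_\pi}\,\Delta_h$. Collecting terms yields $\Delta_{h+1} \le 2\beta_{t-1}\|\sigma_{t-1}(s_h,\bm\pi(s_h))\|_2 + (\bar L_f - 1)\Delta_h$, which is exactly where the definition of $\bar L_f$ comes from.

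It then remains to unroll this linear recursion from $\Delta_0=0$, giving $\Delta_h \le \sum_{k<h}(\bar L_f-1)^{h-1-k}\,c_k$ with $c_k := 2\beta_{t-1}\|\sigma_{t-1}(s_k,\bm\pi(s_k))\|_2$, swap the order of summation over $h$, and bound each geometric partial sum $\sum_{j=0}^{m}(\bar L_f-1)^j$ by $H\,\bar L_f^{H-1}$ (using $\bar L_f \ge 1$). Multiplying by the reward Lipschitz factor $L_r\sqrt{1+NL_\pi}$ and taking the expectation over $\bm\omega$ recovers the claimed bound. I expect the main subtlety to be the bookkeeping around where the posterior standard deviation is evaluated: the optimism term naturally lives on the hallucinated trajectory, but the statement measures uncertainty along the \emph{true} trajectory, and it is precisely the Lipschitz transfer step that forces the $L_\sigma$ contribution into the effective growth rate $\bar L_f$. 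A secondary point to state carefully is the cancellation of noise under the shared-$\bm\omega$ coupling, which is what makes the recursion hold pathwise before taking expectations.
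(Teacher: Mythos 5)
Your proposal is correct and follows essentially the same route as the paper: couple the true and hallucinated rollouts on a common noise realization, reduce the value gap to the accumulated state discrepancy via the closed-loop Lipschitz bound of Fact~\ref{fact:lipschitzness_cl}, and control that discrepancy by the linear recursion whose growth rate is $\bar L_f - 1$. The only difference is that the paper delegates the recursion step to Lemma~4 of \citet{curi2020}, whereas you derive it explicitly (split into model-error and propagation terms, transfer $\sigma_{t-1}$ from the hallucinated to the true trajectory, unroll from $\Delta_0=0$); your derivation matches the content of that cited lemma and your final bookkeeping recovers the stated factor $H\bar L_f^{H-1}$.
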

\begin{proof}
Lemma~\ref{lem:diff_ucb_Vf} can be obtained from \citep[Lemmas~3,4]{curi2020} as follows. 

For a given joint policy $\bm{\pi} \in \bm{\Pi}$, let $\{s_h\}_{h=0}^{H-1}$ be the sequence of environment states when agents play according to $\bm{\pi}$ and the environment transition function is $f(\cdot)$. Note that this sequence is random, depending on the noise realization vector $\bm{\omega}= [w_0,\ldots, w_{H-1}]$. According to this notation, the value function of a generic agent~$i$ can be written as: 
\begin{equation*}
    V^i(\bm{\pi}) = \mathbb{E}_{\bm{\omega}}\left[ \sum_{h=0}^{H-1} r^i(s_h, \bm{\pi}(s_h)) 
   \right].
\end{equation*}
Similarly, consider an hallucinated transition function $\tilde{f}(\cdot)$ and let $\{\tilde{s}_h\}_{h=0}^{H-1}$ be the sequence of environment states visited according to $\tilde{f}(\cdot)$. Also this sequence is random, depending on the noise realizations which we denote with $\tilde{\bm{\omega}}$.
Consider now the hallucinated function $\tilde{f}(\cdot) := \mu_{t-1}(\cdot) + \beta_{t-1}\cdot \Sigma_{t-1}(\cdot) \eta^\star(\cdot)$, where $\eta^\star$ is the auxiliary function that maximizes Eq.~\eqref{eq:UCB_Vf_modelbased} at round $t-1$. According to the introduced notation, we have 
\begin{equation*}
    \text{UCB}_{t-1}^i(\bm{\pi}) = \mathbb{E}_{\tilde{\bm{\omega}}}\left[ \sum_{h=0}^{H-1} r^i(\tilde{s}_h, \bm{\pi}(\tilde{s}_h)) 
   \right].
\end{equation*}

Now, we can use the Lipschitz properties of the closed-loop reward functions (Fact~\ref{fact:lipschitzness_cl}) to obtain
\begin{align}
     \left|\text{UCB}^i_{t-1}(\bm{\pi}) - V^i(\bm{\pi})\right| & = \left|\mathbb{E}_{\tilde{\bm{\omega}}}\left[ \sum_{h=0}^{H-1} r^i(\tilde{s}_h, \bm{\pi}(\tilde{s}_h)) 
   \right] - \mathbb{E}_{\bm{\omega}}\left[ \sum_{h=0}^{H-1} r^i(s_h, \bm{\pi}(s_h)) 
   \right] \right|  \nonumber \\ 
   & = \left|\mathbb{E}_{\tilde{\bm{\omega}}= \bm{\omega} }\left[ \sum_{h=0}^{H-1} r^i(\tilde{s}_h, \bm{\pi}(\tilde{s}_h)) 
   - r^i(s_h, \bm{\pi}(s_h)) 
   \right] \right| \nonumber \\ 
   & \leq L_r \sqrt{1 + N L_\pi} \sum_{h=0}^{H-1} \mathbb{E}_{\tilde{\bm{\omega}}= \bm{\omega}}\big[\|s_{h} - \tilde{s}_{h}\|_2 \big], \label{eq:proof_lemma2_1}
\end{align}
where $\mathbb{E}_{\tilde{\bm{\omega}}= \bm{\omega}}$ is the expectation over $\bm{\omega}$ and taking $\tilde{\bm{\omega}}= \bm{\omega}$. 

At this point, we are left to bound the accumulated difference between the true and the hallucinated trajectory: $\sum_{h=0}^{H-1} \mathbb{E}_{\tilde{\bm{\omega}}= \bm{\omega}}\big[\|s_{h} - \tilde{s}_{h}\|_2 \big]$. For this we can invoke \citep[Lemma 4]{curi2020} which, via a sequence of induction steps and under the calibrated model Assumption~\ref{ass:calibrated}, shows that
\begin{equation*}
        \|s_{h} - \tilde{s}_{h}\|_2 \leq 2 \beta_{t-1}\bar{L}_f^{H-1}\sum_{\tau=0}^{h-1} \|\sigma_{t-1}(s_{\tau}, \bm{\pi}(s_\tau)) \|_2.
\end{equation*}

Then, we can substitute in Eq.~\eqref{eq:proof_lemma2_1} the bound above to obtain
\begin{align*}
    |\text{UCB}^i_{t-1}(\bm{\pi}) - V^i(\bm{\pi})|
   & \leq 2 \beta_{t-1} L_r \sqrt{1 + N L_\pi} \bar{L}_f^{H-1} \sum_{h=0}^{H-1}  \mathbb{E}_{\bm{\omega}}\left[\sum_{\tau=0}^{h-1} \|\sigma_{t-1}(s_{\tau}, \bm{\pi}(s_\tau)) \|_2\right] \\ 
   &  \leq 2 \beta_{t-1} L_r \sqrt{1 + N L_\pi} \bar{L}_f^{H-1} H \cdot \mathbb{E}_{\bm{\omega}}\left[\sum_{h=0}^{H-1} \|\sigma_{t-1}(s_{h}, \bm{\pi}(s_h)) \|_2\right], 
\end{align*}
which completes the proof.
\end{proof}

\subsection{Proof of Theorem~\ref{thm:thm1}}
We are now ready to prove Theorem~\ref{thm:thm1}.
\begin{proof}
According to \textsc{H-MARL}, the joint policy $\bm{\pi}_t$ is sampled from distribution $\bm{\mathcal{P}}_t$ which, by construction, is a CCE of the game defined by the optimistic value functions $\text{UCB}^1_{t-1}(\cdot), \ldots, \text{UCB}^N_{t-1}(\cdot)$ (see Line~2 in Algorithm~\ref{alg:H_MARL}). Hence, by definition of CCE (Def.~\ref{def:CCE}), 
for each player $i$ and any policy $\pi^i \in \Pi^i$, 
\begin{equation}\label{eq:proof_thm1_cce}
    \mathbb{E}_{\bm{\pi} \sim \bm{\mathcal{P}}_t}[\text{UCB}_{t-1}^i(\bm{\pi})] \geq \mathbb{E}_{\pi^{-i} \sim \bm{\mathcal{P}}_t^{-i}}[\text{UCB}_{t-1}^i(\pi^i, \pi^{-i})] \,.
\end{equation}
Equation~\eqref{eq:proof_thm1_cce} implies that, for each player $i$ and and any policy $\pi^i \in \Pi^i$, under Assumption~\ref{ass:calibrated} with probability at least $1-\delta$,
\begin{align}
  \mathbb{E}_{\bm{\pi} \sim \bm{\mathcal{P}}_t}[V^i(\bm{\pi})] & = \mathbb{E}_{\bm{\pi} \sim \bm{\mathcal{P}}_t}[\text{UCB}^i_{t-1}(\bm{\pi})] -  \mathbb{E}_{\bm{\pi} \sim \bm{\mathcal{P}}_t}\big[\text{UCB}^i_{t-1}(\bm{\pi}) - V^i(\bm{\pi})\big]  \nonumber \\ 
  & \geq \mathbb{E}_{\pi^{-i} \sim \bm{\mathcal{P}}_t^{-i}}[\text{UCB}^i_{t-1}(\pi^i, \pi^{-i})] -  \mathbb{E}_{\bm{\pi} \sim \bm{\mathcal{P}}_t}\big[\text{UCB}^i_{t-1}(\bm{\pi}) - V^i(\bm{\pi})\big]  \nonumber \\ 
  & \geq \mathbb{E}_{\pi^{-i} \sim \bm{\mathcal{P}}_t^{-i}}[V^i(\pi^i, \pi^{-i})]  -  \mathbb{E}_{\bm{\pi} \sim \bm{\mathcal{P}}_t}\big[\text{UCB}^i_{t-1}(\bm{\pi}) - V^i(\bm{\pi})\big] , \label{eq:thm1_proof_aux_1}
\end{align}
where the second inequality follows from \eqref{eq:proof_thm1_cce}, and the third one from the model being calibrated (Assumption~\ref{ass:calibrated}) and Lemma~\ref{lem:Vf_confidence_lemma}. Using this, we can bound the dynamic regret for agent~$i$ as:
\begin{align}
    R^i(T) & :=   \sum_{t=1}^{T} \max_{\pi \in \Pi^i} \mathbb{E}_{\pi^{-i} \sim \bm{\mathcal{P}}_t^{-i}}\left[ V^i(\pi, \pi^{-i})\right] -  \mathbb{E}_{\bm{\pi} \sim \bm{\mathcal{P}}_t}\left[V^i({\bm\pi})\right] \label{eq:derivation_regret} \\ 
    & \leq \sum_{t=1}^{T}  \mathbb{E}_{\bm{\pi} \sim \bm{\mathcal{P}}_t}\big[\text{UCB}^i_{t-1}(\bm{\pi}) - V^i(\bm{\pi})\big] \nonumber
    \\
    & \leq \sum_{t=1}^{T} \underbrace{2\beta_{t-1} L_r \sqrt{1 + N L_\pi}\bar{L}_f^{H-1}}_{\leq \bar{L}} H   \cdot \mathbb{E}_{\bm{\pi} \sim \bm{\mathcal{P}}_t, \bm{w}}\left[\sum_{h=0}^{H-1} \|\sigma_{t-1}(s_h, \bm{\pi}(s_h))\|_2\right] \label{eq:proof_thm1_bound_needed} \\ 
    & \leq \bar{L} H \sum_{t=1}^{T}\mathbb{E}_{\bm{\pi} \sim \bm{\mathcal{P}}_t, \bm{w}}\left[\sum_{h=0}^{H-1} \|\sigma_{t-1}(s_h, \bm{\pi}(s_h))\|_2\right] \nonumber \\ 
    & \leq \bar{L} H \sqrt{TH\sum_{t=1}^{T}\mathbb{E}_{\bm{\pi} \sim \bm{\mathcal{P}}_t, \bm{w}}\left[\sum_{h=0}^{H-1} \|\sigma_{t-1}(s_h, \bm{\pi}(s_h))\|_2^2\right]}  \nonumber
\end{align}
where $\bar{L} = 2\bar{\beta} L_r \sqrt{1 + N L_\pi} \left(1 + (L_f+ 2 \bar{\beta}L_\sigma)\sqrt{1 + NL_\pi}\right)^{H-1}$. The first inequality follows from Eq.~\eqref{eq:thm1_proof_aux_1}. The second one follows from Lemma~\ref{lem:diff_ucb_Vf}, the third one by definition of $\bar{L}$ and $\beta_t \leq \bar{\beta}$ for all $t$, while the fourth one by Cauchy-Schwartz. 

Now, applying standard concentration arguments (see, e.g., \citep[Lemma 3]{kirschner2018information}), with probability at least $1-\delta$, it holds
\begin{equation}\label{eq:proof_thm1_concentration}
    \sum_{t=1}^{T}\mathbb{E}_{\bm{\pi} \sim \bm{\mathcal{P}}_t, \bm{w}}\left[\sum_{h=0}^{H-1} \|\sigma_{t-1}(s_h, \bm{\pi}(s_h))\|_2^2\right] \leq \mathcal{O}\left( \sum_{t=1}^{T} \sum_{(s, \mathbf{a}) \in \mathcal{D}_t} \|\sigma_{t-1}(s, \mathbf{a}) \|_2^2 + \log(\frac{1}{\delta}) \right).
\end{equation}
This is because at each round $t$, the term $\mathbb{E}_{\bm{\pi} \sim \bm{\mathcal{P}}_t, \bm{w}}\left[\sum_{h=0}^{H-1} \|\sigma_{t-1}(s_h, \bm{\pi}(s_h))\|_2^2\right]$ is the expected value of $ \sum_{(s, \mathbf{a}) \in \mathcal{D}_t} \|\sigma_{t-1}(s, \mathbf{a}) \|_2^2$, given the history up to round $t-1$.

Hence, by taking the union bound over the events of Lemma~\ref{lem:Vf_confidence_lemma} and Eq.~\eqref{eq:proof_thm1_concentration}, the regret of each agent~$i$ is bounded, with probability at least $1-\delta$ by:
\begin{align*}
    R^i(T) & \leq \mathcal{O} \left( \bar{L} H \sqrt{TH\sum_{t=1}^{T}\sum_{(s, \mathbf{a}) \in \mathcal{D}_t} \|\sigma_{t-1}(s, \mathbf{a}) \|_2^2  + \log(1/\delta)}  \right) \nonumber \\
    & \leq \mathcal{O}\big( \bar{L} H \sqrt{TH \mathcal{I}_T}\big), 
\end{align*}
since $\sum_{t=1}^{T}\sum_{(s, \mathbf{a}) \in \mathcal{D}_t} \|\sigma_{t-1}(s, \mathbf{a}) \|_2^2 \leq \mathcal{I}_T$ by definition of $\mathcal{I}_T$.
\end{proof}
\subsection{When only approximate CCEs are computed at each round}

Theorem~\ref{thm:thm1} assumes that an exact CCE is computed at each round of \textsc{H-MARL} (Line 2 of Algorithm~\ref{alg:H_MARL}). However, in practice one may obtain only $\epsilon_t$-CCE at each round $t$. The next theorem shows that in such a case, the agents' dynamic regret suffers an additive factor which corresponds to the sum of approximation errors $\epsilon_t$. 
As a result, even if equilibria are not computed exactly, agents' dynamic regret can still be sublinear provided that $\epsilon_t$ decreases sufficiently fast.

\begin{thm}\label{thm:approximate}
Let Assumptions~\ref{ass:calibrated},\ref{ass:lipschitzness} be satisfied. Moreover, consider the case where \textsc{H-MARL} computes a $\epsilon_t$-CCE at each round (Line 2 of Algorithm~\ref{alg:H_MARL}). After $T$ rounds, with probability $1-\delta$, each agent~$i$ has bounded dynamic regret:
\begin{equation*}
R^i(T)  \leq  \bar{L} H^{1.5} \sqrt{T \mathcal{I}_T}+ \sum_{t=1}^T \epsilon_t
\end{equation*}
where $\bar{L}=\mathcal{O}\big(N^{H/2} L_\pi^{H/2} (\bar{\beta}^H L_\sigma^H + L_f^H) + \log(1/\delta)\big)$, $\bar{\beta} = \max_{t} \beta_t$, and $\mathcal{I}_T$ is the complexity measure defined in \eqref{eq:complexity_measure}.
\end{thm}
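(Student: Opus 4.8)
The plan is to observe that Theorem~\ref{thm:approximate} follows from a minimal modification of the proof of Theorem~\ref{thm:thm1}: the only place where exactness of the computed equilibrium is used is the CCE inequality~\eqref{eq:proof_thm1_cce}, and relaxing it to the approximate version introduces precisely the additive $\sum_{t=1}^T \epsilon_t$ term, leaving every other step untouched. So the proof is essentially inherited, and I would mainly pinpoint where the slack enters.

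First I would recall that, by definition of $\epsilon_t$-CCE (the approximate version of Def.~\ref{def:CCE}, applied to the hallucinated game with values $\text{UCB}^1_{t-1}(\cdot),\ldots,\text{UCB}^N_{t-1}(\cdot)$), the distribution $\bm{\mathcal{P}}_t$ computed in Line~2 of Algorithm~\ref{alg:H_MARL} satisfies, for each agent $i$ and every $\pi^i \in \Pi^i$,
$$\mathbb{E}_{\bm{\pi} \sim \bm{\mathcal{P}}_t}[\text{UCB}_{t-1}^i(\bm{\pi})] \geq \mathbb{E}_{\pi^{-i} \sim \bm{\mathcal{P}}_t^{-i}}[\text{UCB}_{t-1}^i(\pi^i, \pi^{-i})] - \epsilon_t.$$
This is the sole replacement for~\eqref{eq:proof_thm1_cce}.

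Next I would repeat the chain of inequalities leading to~\eqref{eq:thm1_proof_aux_1}, carrying the extra $-\epsilon_t$ through. Writing $\mathbb{E}_{\bm{\pi}\sim\bm{\mathcal{P}}_t}[V^i(\bm{\pi})]$ as $\mathbb{E}_{\bm{\pi}\sim\bm{\mathcal{P}}_t}[\text{UCB}^i_{t-1}(\bm{\pi})]$ minus the confidence gap $\mathbb{E}_{\bm{\pi}\sim\bm{\mathcal{P}}_t}[\text{UCB}^i_{t-1}(\bm{\pi})-V^i(\bm{\pi})]$, applying the relaxed inequality above, and then invoking Lemma~\ref{lem:Vf_confidence_lemma} and Assumption~\ref{ass:calibrated} (so that $\text{UCB}^i_{t-1}\geq V^i$) yields
$$\mathbb{E}_{\bm{\pi}\sim\bm{\mathcal{P}}_t}[V^i(\bm{\pi})] \geq \mathbb{E}_{\pi^{-i}\sim\bm{\mathcal{P}}_t^{-i}}[V^i(\pi^i,\pi^{-i})] - \mathbb{E}_{\bm{\pi}\sim\bm{\mathcal{P}}_t}[\text{UCB}^i_{t-1}(\bm{\pi})-V^i(\bm{\pi})] - \epsilon_t.$$
Rearranging and taking $\max_{\pi^i \in \Pi^i}$ shows the per-round regret contribution in~\eqref{eq:derivation_regret} is at most the confidence gap plus $\epsilon_t$.

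Finally I would sum over $t=1,\ldots,T$: the confidence-gap terms are bounded exactly as in the proof of Theorem~\ref{thm:thm1} --- by Lemma~\ref{lem:diff_ucb_Vf}, then Cauchy--Schwarz, then the concentration argument~\eqref{eq:proof_thm1_concentration} together with the definition of $\mathcal{I}_T$ --- giving the $\bar{L}H^{1.5}\sqrt{T\mathcal{I}_T}$ term, while the $\epsilon_t$ terms accumulate into $\sum_{t=1}^T \epsilon_t$. I do not expect a genuine obstacle: the whole argument is inherited from Theorem~\ref{thm:thm1}, and the one subtlety worth stating cleanly is that the equilibrium slack enters \emph{additively}, once per round, rather than interacting with the optimism and Lipschitz machinery, precisely because the confidence gap and the equilibrium slack are decoupled in the decomposition~\eqref{eq:derivation_regret}.
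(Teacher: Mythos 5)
Your proposal is correct and follows essentially the same route as the paper's own proof: the paper likewise relaxes the exact CCE inequality~\eqref{eq:proof_thm1_cce} to its $\epsilon_t$-approximate version, carries the additive $-\epsilon_t$ slack through the decomposition to obtain the analogue of~\eqref{eq:thm1_proof_aux_1}, and then reuses the derivation following~\eqref{eq:derivation_regret} verbatim to bound the confidence-gap terms by $\bar{L}H^{1.5}\sqrt{T\mathcal{I}_T}$, with the slacks accumulating into $\sum_{t=1}^T \epsilon_t$. Your observation that the equilibrium slack enters additively and is decoupled from the optimism and Lipschitz machinery is exactly the point the paper's proof relies on.
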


\begin{proof}
At each round $t$, \textsc{H-MARL} computes an $\epsilon_t$-CCE of the game associated to value functions $\text{UCB}^1_{t-1}(\cdot), \ldots, \text{UCB}^N_{t-1}(\cdot)$. Hence,
for each player $i$ and any policy $\pi^i \in \Pi^i$, 
\begin{equation*}
    \mathbb{E}_{\bm{\pi} \sim \bm{\mathcal{P}}_t}[\text{UCB}_{t-1}^i(\bm{\pi})] \geq \mathbb{E}_{\pi^{-i} \sim \bm{\mathcal{P}}_t^{-i}}[\text{UCB}_{t-1}^i(\pi^i, \pi^{-i})] - \epsilon_t \,.
\end{equation*}
By following the same steps of proof of Theorem~\ref{thm:thm1}, this implies that
\begin{equation}\label{eq:proof_thm_approx}
  \mathbb{E}_{\bm{\pi} \sim \bm{\mathcal{P}}_t}[V^i(\bm{\pi})] 
 \geq \mathbb{E}_{\pi^{-i} \sim \bm{\mathcal{P}}_t^{-i}}[V^i(\pi^i, \pi^{-i})]  - \epsilon_t -  \mathbb{E}_{\bm{\pi} \sim \bm{\mathcal{P}}_t}\big[\text{UCB}^i_{t-1}(\bm{\pi}) - V^i(\bm{\pi})\big] ,
\end{equation}
where we have used Assumption~\ref{ass:calibrated} and Lemma~\ref{lem:Vf_confidence_lemma}. Then, we can use Eq.~\eqref{eq:proof_thm_approx} to bound agents' regrets obtaining, 
\begin{align*}
    R^i(T) & :=  \sum_{t=1}^{T} \max_{\pi \in \Pi^i} \mathbb{E}_{\pi^{-i} \sim \bm{\mathcal{P}}_t^{-i}}\left[ V^i(\pi, \pi^{-i})\right] -  \mathbb{E}_{\bm{\pi} \sim \bm{\mathcal{P}}_t}\left[V^i({\bm\pi})\right] \\ 
    & \leq   \sum_{t=1}^T \epsilon_t + \bar{L} H \sum_{t=1}^{T}\mathbb{E}_{\bm{\pi} \sim \bm{\mathcal{P}}_t, \bm{w}}\left[\sum_{h=0}^{H-1} \|\sigma_{t-1}(s_h, \bm{\pi}(s_h))\|_2\right] \\ 
    & \leq \bar{L} H \sqrt{TH \mathcal{I}_T} + \sum_{t=1}^T \epsilon_t, 
\end{align*}
where we have used the same derivation done after Eq.~\eqref{eq:derivation_regret}.
\end{proof}

\subsection{Proof of Theorem~\ref{thm:thm2}}
\begin{proof}
At round $t^\star$, where $t^\star$ is selected according to Theorem~\ref{thm:thm2}, the proof steps of Theorem~\ref{thm:thm1} (in particular Eq.~\eqref{eq:thm1_proof_aux_1}) imply that 
\begin{equation}
\mathbb{E}_{\bm{\pi} \sim \bm{\mathcal{P}}_t^\star}[V^i(\bm{\pi})] \geq \mathbb{E}_{\pi^{-i} \sim {\bm{\mathcal{P}}_t^\star}^{-i}}[V^i(\pi^i, \pi^{-i})]  -  \underbrace{ \mathbb{E}_{\bm{\pi} \sim \bm{\mathcal{P}}_t^\star}\big[\text{UCB}^i_{t^\star-1}(\bm{\pi}) - V^i(\bm{\pi})\big]}_{:= \epsilon_T^i} ,
\end{equation}
That is, distribution $\bm{\mathcal{P}}_t^\star$ is an $\epsilon$-CCE of the underlying Markov game, where the approximation factor $\epsilon$ is bounded by $\max_{i} \epsilon_T^i$ and $\epsilon_T^i$ is the quantity defined above. In what follows, we show that 
\begin{equation}\label{eq:proof_thm2_bound}
    \max_{i} \epsilon_T^i \leq   2 T^{-\frac{1}{2}}\bar{L} H^{1.5} \sqrt{\mathcal{I}_T}.
\end{equation}
Then, Theorem~\ref{thm:thm2} follows by selecting an accuracy level $\epsilon>0$ and use Eq.~\eqref{eq:proof_thm2_bound} to solve for $T$.

To prove \eqref{eq:proof_thm2_bound}, recall that $t^\star$ is selected according to: 
\begin{equation}\label{eq:proof_thm2_tstar}
    t^\star = \arg \min_{t \in [T]} \max_{i\in [N]} \mathop{\mathbb{E}}_{\bm{\pi} \sim \bm{\mathcal{P}}_{t}}[ \textup{UCB}^i_{t-1}(\bm{\pi})-\textup{LCB}^i_{t-1}(\bm{\pi})] , 
\end{equation}
where the function 
$\textup{LCB}^i_t(\bm{\pi})$ is the solution of Eq.~\eqref{eq:UCB_Vf_modelbased} with the outer maximization replaced by a minimization over $\eta(\cdot)$. 

By following the same steps of the confidence Lemma~\ref{lem:Vf_confidence_lemma}, with probability at least $1-\delta$ it holds 
\begin{equation}\label{eq:proof_thm2_confidence}
\text{LCB}^i_t(\bm{\pi}) \leq V^i(\bm{\pi}) \quad \forall i \in [N], \quad \forall \bm{\pi} \in \bm{\Pi},  \quad \forall t\geq0 ,   
\end{equation}
that is, $\textup{LCB}^i_t(\bm{\pi})$ is a lower confidence bound on the agents' value functions.
Moreover, the distance $\left|\text{LCB}^i_t(\bm{\pi})- V^i(\bm{\pi})\right|$ can be bounded exactly as was done in Lemma~\ref{lem:diff_ucb_Vf} for $\left|\text{UCB}^i_t(\bm{\pi})- V^i(\bm{\pi})\right|$. The only difference in its proof consists of considering $\{\tilde{s}_h\}_{h=0}^{H-1}$ to be the sequence of environment states resulting from the pessimistic transition model $\tilde{f}(\cdot) = \mu_{t-1}(\cdot) + \beta_{t-1}\cdot \Sigma_{t-1}(\cdot)\eta^{lcb}(\cdot)$ where $\eta^{lcb}$ is the minimizer of Eq.~\eqref{eq:UCB_Vf_modelbased}.

Then, according to \eqref{eq:proof_thm2_tstar} and \eqref{eq:proof_thm2_confidence} we can bound $\max_i \epsilon_T^i$ as
\begin{align*}
    \max_i \epsilon_T^i & := \max_i \mathbb{E}_{\bm{\pi} \sim \bm{\mathcal{P}}_t^\star}\big[\text{UCB}^i_{t^\star-1}(\bm{\pi}) - V^i(\bm{\pi})\big] \\ 
    & \leq  \max_i \mathbb{E}_{\bm{\pi} \sim \bm{\mathcal{P}}_t^\star}\big[\text{UCB}^i_{t^\star-1}(\bm{\pi}) - \text{LCB}^i_{t^\star-1}(\bm{\pi})\big]
    \\ 
    & \leq  \frac{1}{T} \sum_{t=1}^T \max_i  \mathbb{E}_{\bm{\pi} \sim \bm{\mathcal{P}}_t}\big[\text{UCB}^i_{t-1}(\bm{\pi}) - \text{LCB}^i_{t-1}(\bm{\pi})\big] \\ 
    & \leq  \frac{1}{T} \sum_{t=1}^T \max_i  \mathbb{E}_{\bm{\pi} \sim \bm{\mathcal{P}}_t}\big[\left|\text{UCB}^i_{t-1}(\bm{\pi}) - V^i(\bm{\pi}) \right| + \left|\text{LCB}^i_{t-1}(\bm{\pi}) - V^i(\bm{\pi}) \right| \big]
    \\
    & \leq \frac{2}{T} \sum_{t=1}^{T} 2\beta_{t-1} L_r \sqrt{1 + N L_\pi}\bar{L}_f^{H-1} H   \cdot \mathbb{E}_{\bm{\pi}_t, \bm{w}}\left[\sum_{h=0}^{H-1} \|\sigma_{t-1}(s_h, \bm{\pi}_t(s_h))\|_2\right] \\
    & \leq \frac{2}{T} \bar{L} H \sqrt{TH \mathcal{I}_T} = 2 T^{-\frac{1}{2}}\bar{L} H^{1.5} \sqrt{\mathcal{I}_T}.
\end{align*}
In the last equality we have used the same bound from Eq.~\eqref{eq:proof_thm1_bound_needed}.
\end{proof}

\section{Supplementary material for Section~\ref{sec:experiments}}\label{app:appendix_experiments}

In this section, we provide additional details concerning our experimental setup illustrated in Section~\ref{sec:experiments}.

\textbf{Multi-agent equilibria vs. single-agent optimal policies.} We compare the performance of multi-agent equilibria, with respect to computing single-agent optimal policies for the AVs. 
We consider the lane merging scenario and assume the HD vehicles' model is known. Then, we obtain driving policies for the AVs by:
\begin{itemize}
    \item[i)] Computing equilibrium policies using independent DQN learning. AVs' policies are trained simultaneously using the DQN algorithm, for 600 iterations. Then, we select the joint policy leading to the highest sum of agents' reward. 
    \item[ii)] Computing single-agent optimal policies. These are obtained as follows: We consider one agent at a time, by removing the other agent from the environment and replacing it with a HD vehicle driving on the same lane. For each agent, we obtain an optimal policy running DQN algorithm for 600 iterations and selecting the checkpoint with highest reward.
\end{itemize}
We evaluate the policies coming from either i) or ii) for 50 episodes and plot the corresponding rewards in Figure~\ref{fig:equilibrium_benefit}. Equilibrium policies lead to higher average and individual rewards for the agents. In particular, we observe that the single-agent optimal policy of AGENT-1 is often to break and drive behind the HD vehicle, especially when the HD vehicle drives at moderate/high speed. Instead, when using equilibrium policies, both AGENT-0 and AGENT-1 often coordinate in overtaking the HD vehicle, yielding higher rewards. This is confirmed by Table~\ref{table:appendix_equilibrium_advantage} which shows that agent's distance to their goal position (especially for AGENT-1) is lower when using equilibrium policies. 
\begin{table}[h!]
\small
  \begin{center}
    \begin{tabular}{r|c|c|c}
     & Avg. agent & AGENT-0 & AGENT-1\\[0.2em]
     \hline
     \rule{0pt}{1.0\normalbaselineskip}
      Single-agent optima & 57.49 m & 52.43 m & 62.55 m \\[0.3em]
      Multi-agent equilibria &  \textbf{56.09} m &  \textbf{52.17} m &  \textbf{60.02} m\\
      \hline
    \end{tabular}
  \end{center}
  \vspace{-0.9em}
          \caption{Agents' distance to their goal position (lower is better) averaged over 50 runs, when using single-agent optimal policies versus equilibrium policies.}
           \label{table:appendix_equilibrium_advantage}
          \vspace{-1.2em}
\end{table}

\textbf{HD vehicles' model.} 
To learn the behavior of HD vehicles, we use a GP model. The model maps the current HD vehicle's state (position and velocity) and its relative position to the other vehicles, to the next state for the HD vehicle. More specifically, we train our GP model on the \textit{changes} (i.e., increase or decrease) of the HD vehicle's position and velocity. Hence, the next HD state is obtained by summing the current state's coordinates with the GP predictions. 
We use a Màtern kernel for predicting the speed change (as we expect this to be quite nonsmooth), and a squared exponential kernel to predict its change in position.
However, we have observed different kernel choices (e.g., only using squared-exponential kernels) to produce similar performance.  At the end of each round $t$, GP inference is performed on the whole set of past observed trajectories $\{\mathcal{D}_\tau\}_{\tau=1}^t$ using GPyTorch~\cite{gardner2018gpytorch} with Adam ~\citep{kingma2014adam} optimizer for 50 iterations with learning rate $l=0.1$. At round $0$, we initialize the HD model with 2 random samples taken from past simulations.

\textbf{\textsc{H-MARL} implementation.} 
To compute equilibrium policies at each round $t$ (Line 2 of Algorithm~\ref{alg:H_MARL}), we run $50$ iterations of independent DQN~\cite{mnih2015human} learning (initialized by the $50^{th}$ checkpoint from round $t-1$) and select distribution $\bm{\mathcal{P}}_{t+1}$ to be the uniform distribution over checkpoints $\{35^{th}, 40^{th}, 45^{th}, 50^{th}\}$. As mentioned in Section~\ref{sec:experiments}, this alleviates non-convergence behaviors and mimics the way CCEs are computed by independent no-regret learning dynamics. 
The hallucinated optimistic value functions $\text{UCB}_t^i(\cdot)$ are approximated by the sampling approach of Eq.~\eqref{eq:UCB_practical} with $Z=5$ samples at each time step and $\beta_t = 1.0$. This is effectively achieved as follows. During each policy evaluation step, we sample $Z$ plausible next states for the HD vehicle and keep only the one that leads to the largest reward for the agents. Then, this process continues until the end of the episode. We have observed that changes in HD vehicles' position are accurately predicted by the GP model even with few data (this is because, given current position and speed, they only depend on the kinematics of the car and are invariant to the position of the AVs). Hence, for computational efficiency, we always use the posterior mean about the next HD vehicle's position, and sample $Z=5$ plausible values for its next speed. We have also observed that uniformly spaced samples lead to better performance, compared to random i.i.d. samples.   \looseness=-1

\textbf{Thompson Sampling (TS) implementation.} We follow the exact same implementation of \textsc{H-MARL} with the important difference on how HD vehicle's states are computed during policy evaluation. Indeed, according to TS the next HD vehicle's state $s_{h+1}$ is sampled from the posterior Gaussian distribution with mean $\mu_t(s_h, \mathbf{a}_h)$ and covariance $\Sigma_t(s_h, \mathbf{a}_h)$. Note that this is substantially different from the proposed optimistic \textsc{H-MARL} where, among plausible next states, we select the ones leading to the highest agents' reward.

\textbf{Model-based vs. model-free.} In this section, we compare our model-based \textsc{H-MARL} approach with the more general \emph{model-free} DQN~\cite{mnih2015human} algorithm. In DQN, agents are trained directly on the rewards observed from the `true'  environment (i.e., interacting with the true HD vehicle's model). Instead, in 
\textsc{H-MARL} we assume a known model structure and agents interact with a hallucinated version of the environment where the HD vehicles are simulated according to the current (optimistic) model estimate. Indeed, in \textsc{H-MARL} we utilize a DQN subroutine at each round (see our implementation above) which, however, interacts only with the hallucinated model and not with the true one. Hence, we expect the model-free DQN to require a significantly higher number of environment interactions to achieve comparable performance. This is confirmed in Figure~\ref{fig:comparison_with_modelfree}, where we plot agents' average reward as a function of the interaction rounds, both for the lane merging and intersection scenario. Perhaps not surprisingly, the model-free DQN requires interactions of several higher orders of magnitudes to achieve similar rewards.

\begin{figure*}[t!]
\centering
\begin{subfigure}[b]{0.6\textwidth}
\includegraphics[width=\textwidth]{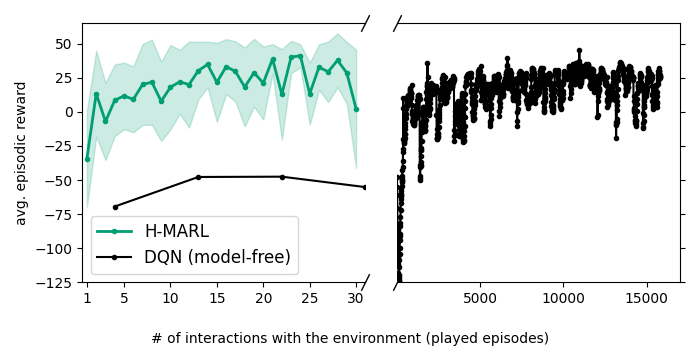}
\caption{Lane merging scenario}
\end{subfigure}
\hspace{1.5em}
\begin{subfigure}[b]{0.6\textwidth}
\includegraphics[width=\textwidth]{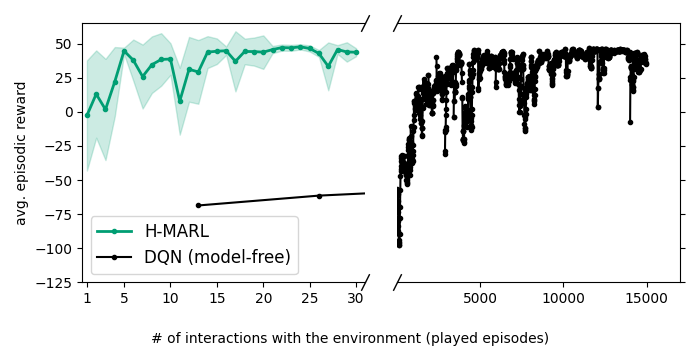}
\caption{Intersection scenario}
\end{subfigure}
\caption{Average agents' reward as a function of the interactions with the true environment (i.e., with the true HD vehicle's model), when agents' policies are computed according to the model-based \textsc{H-MARL}, or when using the model-free DQN~\cite{mnih2013playing} algorithm. Model-free DQN requires a significantly higher number of interactions to achieve comparable performance. We remark that \textsc{H-MARL} utilizes a DQN subroutine at each round which, however, interacts with a hallucinated model for the HD vehicle and not with the true one.} 
\label{fig:comparison_with_modelfree}
\end{figure*}

\begin{figure*}[ht]
\centering
\begin{subfigure}[b]{0.4\textwidth}
\includegraphics[height=0.5\textwidth]{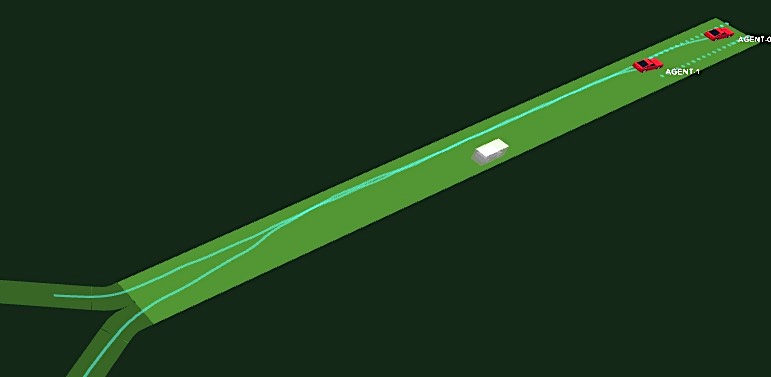}
\caption{Overtake and merge maneuver}
\label{fig:double_merge_overtake}
\end{subfigure}
\hspace{2.5em}
\begin{subfigure}[b]{0.4\textwidth}
\includegraphics[height=0.5\textwidth]{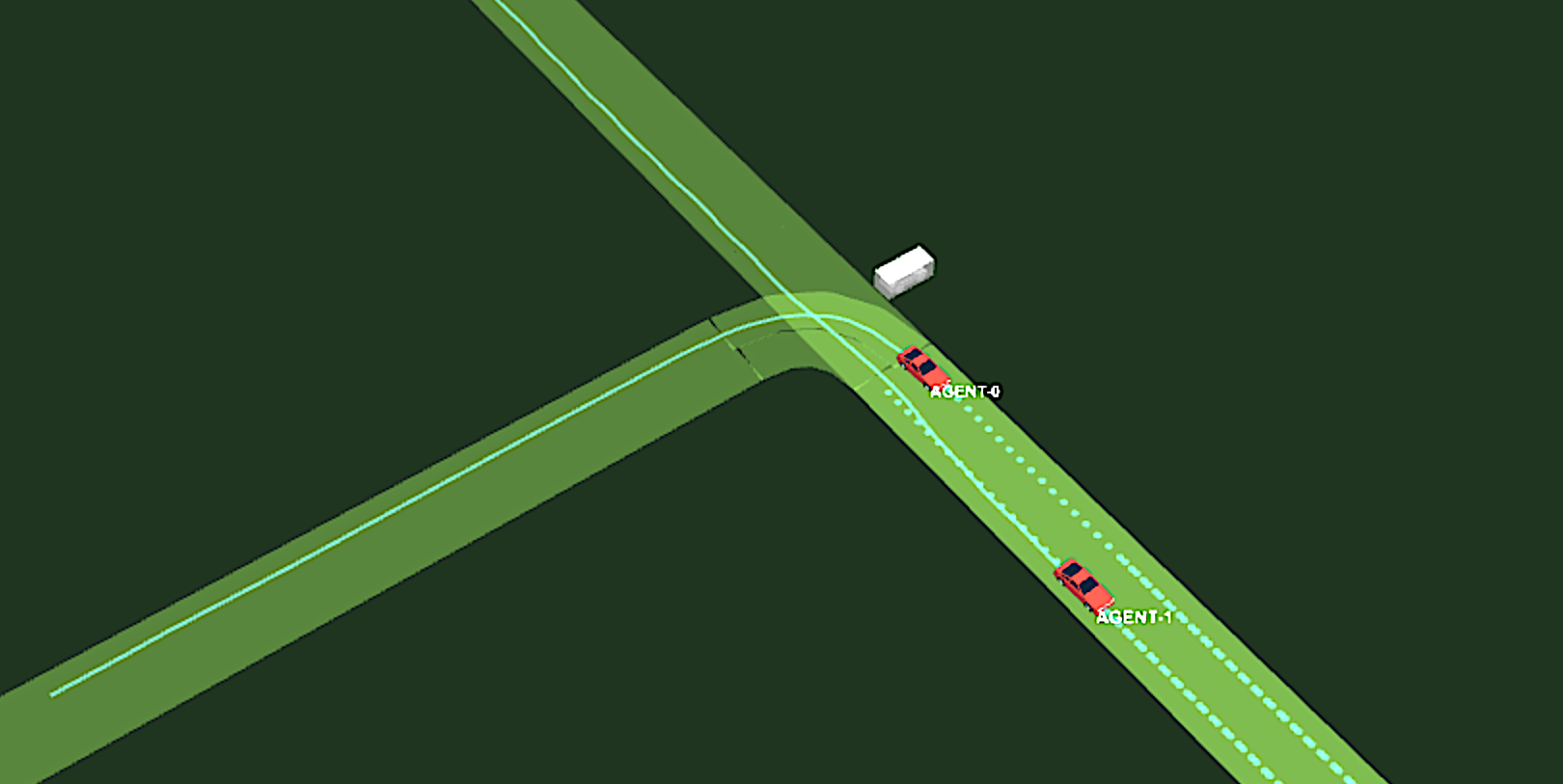}
\caption{Intersection crossing}
\label{fig:intersection_maneuver}
\end{subfigure}
\caption{(a) \textbf{Lane merging scenario} (see Fig.~\ref{fig:scenarios}~(a)). Example of a successful overtake and merge maneuver: Both agents accelerate so that AGENT-$1$ overtakes the HD vehicle, and AGENT-$0$ successfully merges. (b) \textbf{Intersection scenario} (see Fig.~\ref{fig:scenarios}~(b)). Example of a successful crossing: AGENT-1 crosses before both AGENT-0 and the HD vehicle, while AGENT-0 waits for the HD vehicle to cross, and then turns right.}
\label{fig:example_of_maneuvers}
\end{figure*}


\end{document}